\DeclarePairedDelimiter{\abs}{\lvert}{\rvert}
\DeclarePairedDelimiter{\norm}{\lVert}{\rVert}
\def\Snospace~{\S{}}
\newcommand{\new}{\textcolor{red}}
\theoremstyle{definition}
\newtheorem{example}{Example}
\newtheorem{proposition}{Proposition}
\newtheorem{lemma}{Lemma}
\icmltitlerunning{Extracting Finite Automata from RNNs Using State Merging}
\begin{document}

\twocolumn[
\icmltitle{Extracting Finite Automata from RNNs Using State Merging}

% It is OKAY to include author information, even for blind
% submissions: the style file will automatically remove it for you
% unless you've provided the [accepted] option to the icml2021
% package.

% List of affiliations: The first argument should be a (short)
% identifier you will use later to specify author affiliations
% Academic affiliations should list Department, University, City, Region, Country
% Industry affiliations should list Company, City, Region, Country

% You can specify symbols, otherwise they are numbered in order.
% Ideally, you should not use this facility. Affiliations will be numbered
% in order of appearance and this is the preferred way.
\icmlsetsymbol{equal}{*}

\begin{icmlauthorlist}
\icmlauthor{William Merrill}{equal,cds}
\icmlauthor{Nikolaos Tsilivis}{equal,cds}
\end{icmlauthorlist}

\icmlaffiliation{cds}{Center for Data Science, NYU}

\icmlcorrespondingauthor{William Merrill}{willm@nyu.edu}
\icmlcorrespondingauthor{Nikolaos Tsilivis}{nt2231@nyu.edu}

% You may provide any keywords that you
% find helpful for describing your paper; these are used to populate
% the "keywords" metadata in the PDF but will not be shown in the document
\icmlkeywords{Machine Learning, ICML, Finite Automata, NLP, Interpretability}

\vskip 0.3in
]

% this must go after the closing bracket ] following \twocolumn[ ...

% This command actually creates the footnote in the first column
% listing the affiliations and the copyright notice.
% The command takes one argument, which is text to display at the start of the footnote.
% The \icmlEqualContribution command is standard text for equal contribution.
% Remove it (just {}) if you do not need this facility.

% \printAffiliationsAndNotice{}  % leave blank if no need to mention equal contribution
% TODO: Re-add the next line in final version... commented to suppress warnings in anon version
\printAffiliationsAndNotice{\icmlEqualContribution} % otherwise use the standard text.

\begin{abstract}
    One way to interpret the behavior of a blackbox recurrent neural network (RNN) is to extract from it a more interpretable discrete computational model, like a finite state machine, that captures its behavior. In this work, we propose a new method for extracting finite automata from RNNs inspired by the state merging paradigm from grammatical inference. We demonstrate the effectiveness of our method on the Tomita languages benchmark, where we find that it is able to extract faithful automata from RNNs trained on all languages in the benchmark. We find that extraction performance is aided by the number of data provided during the extraction process, as well as, curiously, whether the RNN model is trained for additional epochs after perfectly learning its target language. We use our method to analyze this phenomenon, finding that training beyond convergence is useful because it leads to compression of the internal state space of the RNN. This finding demonstrates how our method can be used for interpretability and analysis of trained RNN models.
\end{abstract}
% \footnote{Code is available at \nikos{remove or anonymize} \href{https://github.com/viking-sudo-rm/dfa-extractor}{https://github.com/viking-sudo-rm/dfa-extractor}}.

\section{Introduction}

Interpretability poses a problem for deep-learning based sequence models like recurrent neural networks (RNNs). When trained on language data and other structured discrete sequences, such models implicitly acquire structural rules over the input sequences that modulate their classification decisions. However, it is often difficult to recover the discrete rules encoded in the parameters of the network. Traditionally, it is useful to be able to understand the rules a model is using to reach its classification decision, as models like decision trees or finite automata allow. Not only does this address practical deployment concerns like being able to explain model decisions or debug faulty inputs, but it also has more foundational scientific uses: e.g., inducing a model of grammar over natural sentences can help test and build linguistic theories of the syntax of natural language. Another potential use of extracted automata is to study the training dynamics of neural networks: automata can be extracted from different checkpoints and compared to understand how the strategy of an RNN evolves over training. In \autoref{sec:beyond_conv}, we will discuss analysis of our method that may provide insight on the implicit regularization of RNN training.

How, then, can we gain insight into the discrete rules acquired by RNN models? One family of approaches is to extract finite automata that capture the behavior of an RNN, and use the extracted state machine for interpretability. This problem can be seen as a special case of grammar induction \citep{Gold_1978, Angluin1987LearningRS}: the task of, given samples from a formal language, inferring a small automaton that will generate the data. Thus, past work on RNN extraction has generally adapted techniques from the grammar induction literature. For example, \citet{weiss2018extracting} do this by adapting the $L^*$ query learning algorithm for grammar induction to work with an RNN oracle. \citet{lacroce2021extracting} leverage spectral learning, a different framework for grammar induction, to infer weighted automata from RNNs, as opposed to the more standard deterministic automata. Other work has used $k$-means clustering on the RNN hidden states to extract a graph of states \citep{wang2018empirical}.

% https://wcventure.github.io/Active-Automata-Learning/
$L^*$ is an active learning approach that learns via queries of two forms: membership of strings in $L$, and equivalence queries comparing a candidate DFA and the true underlying DFA. Thus, \citet{weiss2018extracting} assume blackboxes computing these oracles are available at train time, which may be problematic for the potentially expensive equivalence queries. The $k$-means method of \citet{wang2018empirical} does not have this problem, although it comes with no theoretical guarantees of faithfulness, and requires that the number of states must be picked as a hyperparameter. In this work, we will present an alternative extraction method that does not require expensive equivalence queries, and where the number of states does not need to be set in advance.

To meet these goals, we will leverage \emph{state merging} \citep{oncina1992inferring, lang1998results, sebban2003state}, another grammar induction paradigm, to extract deterministic finite automata (DFAs) from RNNs. State merging works by first building a prefix tree from a finite dataset: a deterministic automaton that simply memorizes the training data, and will not recognize any held-out strings beyond the finite set used to build the prefix tree. The next step of the process is to compress this prefix tree by merging states together, using a strategy (or `policy') $\pi$. This process both reduces the automaton size and introduces loops between states. Through this, the automaton gains the ability to generalize to an infinite set of held-out strings. Of course, the nature of this generalization depends on how $\pi$ is computed. For grammatical inference, $\pi$ is generally computed by verifying simple constraints are met: in order to merge two states, the states must agree in whether or not they are final states. We will add an additional constraint that the RNN representations associated with each state must be close, thus enforcing that our learned automaton reflects the structure of the RNN's implicit state space.

% \will{Some other work that we should maybe cite.. have other people done state merging already?}
% https://www.ncbi.nlm.nih.gov/pmc/articles/PMC8064235/

% One advantage of state merging compared to $L^\star$ or spectral learning it is that is more easily extensible for learning more complicated automata like counter automata \citep{fischer1968counter} or pushdown automata \citep{Chomsky1956Three} under the same framework. There is strong theoretical evidence to believe that complex neural sequence models like LSTMs and transformers rely on counting to reach decisions \citep{Weiss2018OnTP, merrill-2019-sequential}, which motivates extracting counter automata from these models instead of finite automata. Learning pushdown automata is desirable because pushdown automata capture the essence of hierarchical structure \citep{Chomsky1963TheAT}, a crucial feature of many linguistic theories. In order to extract pushdown automata from RNNs with $L^\star$, \citet{yellin2021synthesizing} use a pipelined approach, first extracting finite automata, and then using a second step to convert finite automata to pushdown automata. State merging, on the other hand, has the potential to enable direct extraction of counter automata, pushdown automata, or other models of computation, from RNNs, as the merging constraints used by the algorithm permit a high degree of customization.

In summary, we introduce state merging as a method to extract DFAs from blackbox RNNs.
% Due to the short scope of the current project, we restrict ourselves to extracting finite automata from RNNs using state merging, viewing our results as a proof of concept for the paradigm.
We first show in \autoref{subsec:extr_faith} that our state merging method enables RNN extraction on all 7 Tomita languages \citep{tomita1982dynamic}, the standard benchmark for evaluating RNN extraction. As an additional contribution, we use our method to show that continuing to train an RNN past convergence in development accuracy makes it easier to extract a DFA from it, and the implicit state space of the resulting DFA is simplified (\autoref{sec:beyond_conv}). We discuss speculatively how this phenomenon may have implications for understanding the implicit regularization of RNN training.
% We start by providing some background (\autoref{sec:background}) on RNNs and finite automata, and then outline our method (\autoref{sec:method}). We then discuss the specifics of our data and RNN modelling choices (\autoref{sec:data}). \autoref{sec:eval} evaluates our method's ability to extract automata from RNNs on the Tomita languages. In that section, we also show how extracting automata can be used to study implicit regularization during RNN training.
% % \autoref{sec:visualizing} examines the automata extracted using our method, and discusses what they can reveal about the inner workings of RNNs.
% Finally, we conclude in \autoref{sec:conclusion} with a discussion of possible improvements to the state merging RNN extraction paradigm, as well as how state merging could be adapted to learn more complex automata.

\section{Background} \label{sec:background}

\subsection{Recurrent Neural Networks} \label{sec:rnns}

For our purposes, a generalized RNN is a function mapping a sequence of symbols $\{w_i\}_{i=1}^n$ to a sequence of labels $\{y_i\}_{i=1}^n$. In the abstract, the RNN has a state vector $\mathbf h_i \in \mathbb{R}^d$ that satisfies the following form for some gating function $f$:
\begin{align*}
    \mathbf 
    \mathbf h_{i + 1} &= f(\mathbf h_i, w_{i + 1}) \\
    \mathbf y_{i + 1} &= \mathrm{argmax}(\mathbf w^\top \mathbf h_{i + 1} + b) .
\end{align*}
\noindent In principle, our method can be applied to RNNs with any gating function $f$, but, in the paper, we will use the simple recurrent gating \citep{elman1990finding}:
\begin{equation}\label{eq:state_transition}
    \mathbf h_{i + 1} = \tanh(U \mathbf h_i + V \mathbf x_{i+1}) ,
\end{equation}
\noindent where $\mathbf x_i$ is a vector embedding of token $w_i$. Other common variants include Long Short-Term Memory networks \citep[LSTMs;][]{sepp1997lstm} and Gated Recurrent Units \citep[GRUs;][]{cho2014properties}.

\subsection{Deterministic Finite Automata} \label{sec:dfas}
Automata have a long history of study in theoretical computer science, linguistics, and related fields, originally having been formalized in part as a discrete model of neural networks \citep{kleene1956automata, minsky1956some}.
A deterministic finite automaton (DFA) can be specified as a tuple $A = \langle \Sigma, Q, q_0, \delta, F\rangle$, where:
\begin{compactitem}
    \item $\Sigma$ is a finite input alphabet (set of tokens);
    \item $Q$ is a set of states, along with a special ``undefined'' state $\emptyset$, where $\emptyset \not\in Q$;
    \item $q_0 \in Q$ is an initial state;
    \item $\delta : (Q \cup \{ \emptyset \}) \times \Sigma \to (Q \cup \{ \emptyset \})$ is a transition function such that $\forall \sigma \in \Sigma$, $\delta(\emptyset, \sigma) = \emptyset$;
    \item $F \subseteq Q$ is a set of accepting states.
\end{compactitem}

Now that we have formally specified this model, how does one do computation with it? Informally, when processing a string $w \in \Sigma^n$, $A$ starts in state $q_0$, and each token in the input string causes it to transition to a different state according to $\delta$. Once all input tokens have been consumed, the machine either accepts or rejects the input string depending on whether the final state $q_n \in F$. More formally, we define the state after the prefix $w_{:i}$ as:
\begin{align*}
    q_i = \delta(q_{i-1}, w_i) .
\end{align*}
% \nikos{typically, we only defined $\delta$ to accept tokens, not strings. in the actual paper, we should talk about the ``extended" function}
\noindent We then say that $A$ accepts a string $w \in \Sigma^n$ if and only if $q_n \in F$. The regular language recognized by $A$ is the set of strings it accepts, i.e.,
\begin{equation*}
    L(A) = \{w \mid q_{n}(w) \in F\} .
\end{equation*}

\begin{example}
    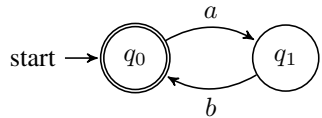
\begin{figure}
    \centering
    \begin{tikzpicture}[->,>=stealth',shorten >=1pt,auto,node distance=2cm, semithick]
    \node[initial,state,accepting] (2) {$q_0$};
    \node[state,right of=2] (3) {$q_1$};
    
    \path (2) edge [bend left, above] node {$a$} (3);
    \path (3) edge [bend left, below] node {$b$} (2);
    \end{tikzpicture}
    \caption{A DFA specified as $\langle \Sigma, Q, q_0, \delta, F\rangle$ with $\Sigma = \{a, b\}$,  $Q = \{q_0, q_1\}$, $\delta \text{ such that } \delta(q_0, a) = q_1, \delta(q_1, b) = q_0$, and $F = \{q_0\} $. It recognizes the language $(ab)^* = \{\epsilon, ab, abab, ababab, \cdots\}$. The $*$ symbol denotes Kleene star, i.e., $0$ or more repetitions of a string.}
    \label{fig:dfa_example}
    \end{figure}
    Consider the DFA in \autoref{fig:dfa_example}. It recognizes the language $(ab)^\star = \{\epsilon, ab, abab, ababab, \cdots\}$, and it is the minimal automaton that does so.
    \begin{itemize}
        \item For the string $ab$, the computation would start from $q_0$ (the initial state - common for any string), then the automaton would traverse to $q_1$ ($\delta(q_0, a) = q_1$), and, subsequently, to $q_0$ ($\delta(q_1, b) = q_0$). Since the final state $q_1$ belongs to $F$ after consuming all input tokens, we say that the DFA accepts $ab$.
        \item For the string $aba$, the DFA identically reaches state $q_0$ after consuming the prefix $ab$. However, the final $a$ causes the DFA to transition back to $q_1$. Because $q_1 \not\in F$, the DFA rejects $aba$.
        \item For the string $abb$, the DFA also reaches state $q_0$ after consuming the prefix $ab$. At this point, the transition $\delta(q_0, b) = \emptyset$, so the state will be $\emptyset$ (``error'') for the rest of the string. Since $\emptyset \not\in F$, the DFA rejects $abb$.
    \end{itemize}
\end{example}   

\subsection{Power of DFAs}

DFAs are equivalent to nondeterministic finite automata, both recognizing the regular languages \citep{kleene1956automata}. The regular languages form the lowest level of the Chomsky hierarchy \citep{Chomsky1956Three}, and intuitively represent languages that can be recognized with memory that does not grow with the sequence length. In contrast, more powerful classes allow the memory of the recognizer to grow with the length of the input string. For example, context-free languages correspond to nondetermistic finite automata augmented with a stack data structure \citep{Chomsky1956Three}, enabling $O(n)$ memory on strings of length $n$.
Other classes in the Chomsky hierarchy correspond to the languages recognizable by even more complex automata: for example, the recursive languages correspond to the set of languages that can be recognized by a Turing machine.

\subsection{Connections of RNNs to DFAs}

At a high level, DFAs and RNNs can both be used to match the language recognition task specification: essentially, binary classification over strings. RNNs with continuous activation functions and unbounded computation time and precision have been shown to be Turing-complete, meaning they can recognize languages that are not regular \citep{Siegelmann1992OnTC}. However, more recent literature has argued that these assumptions differ substantially from the type of RNNs trained in modern deep learning \citep{weiss2018, merrill-2019-sequential}. The same work suggests that the regular languages are a much more reasonable model for the capacity of RNNs as trainable deep learning model. We now briefly summarize this line of research.

Some of the original motivation for formalizing finite automata came from trying to develop a model of computation for early connectionist versions of neural networks \citep{kleene1956automata, minsky1956some}. Thus, by design, RNNs with threshold activation functions are equivalent in terms of the set of languages they can recognize to finite automata \citep{merrill-etal-2020-formal}. More recent work has shown that the infinite parameter norm limits of simple RNNs and GRUs are equivalent to finite automata in expressive power \citep{merrill-2019-sequential}, and found that language learning experiments with these networks can often be predicted by the theoretical capacity of these ``saturated'' infinite-norm networks \citep{merrill-etal-2020-formal}. For instance, \citet{weiss2018} found that RNNs and GRUs cannot ``count'' (a capability requiring more than finite state), unlike the more complicated LSTM. Combining this theoretical and empirical evidence suggests that simple RNNs and GRUs behave as finite-state models, rather than models whose states grow with the input length. This perspective supports using deterministic finite-state automata as a target for extraction with RNNs.

We note, however, that if we would like to do extraction for LSTMs or other complex RNNs, it could make sense to extract a counter automaton \citep{fischer1968counter} rather than a finite automaton, which we believe state merging could be adapted to accommodate in future work.

\section{Method} \label{sec:method}

We now describe our state merging method for extracting DFAs from RNNs. Our method assumes a blackbox RNN model that supplies the following desiderata:
\begin{enumerate}
    \item \textbf{Hidden States} Given an input string $x \in \Sigma^n$, and for each $0 \leq i \leq n$, the RNN produces a vector $\mathbf h_i \in \mathbb{R}^k$ that encodes the full state of the model after processing the prefix of $w$ up to index $i$. Thus, $\mathbf h_0$ corresponds to a representation for the empty string $\epsilon$. We will write $\mathbf H$ to mean the full $(n + 1) \times k$ hidden state matrix.
    \item \textbf{Recognition Decisions} Given an input string $x \in \Sigma^n$, the RNN produces a vector $\mathbf{\hat y} \in (0,1)^{n+1}$ that scores the probability that \emph{each prefix} of $x$ is a valid string in the formal language defined by the RNN.
\end{enumerate}
Our method can be applied to any model satisfying these properties. However, as discussed in the previous section, it is most motivated to apply it to simple RNNs or GRUs, which have been shown to resemble finite state machines. If our method is applied to an LSTM or other complex RNN variant, the extracted DFA will potentially be a finite-state approximation of more complex behavior.

Our state merging algorithm has two parts: first, we construct a prefix tree using the recognition decisions $\mathbf{\hat y}$. Next, we merge states in the prefix tree according to the RNN hidden states $\mathbf{H}$.

\subsection{Building the Prefix Tree}

A prefix tree, or trie, is a DFA that can be built to correctly recognize any language $L$ over all prefixes of a finite support of strings $\{w_i\}^m_{i=1}$. Each state in the tree represents a prefix of some $w_i$, and is labelled according to whether that prefix is a valid string in $L$. Paths of transitions are added to the tree to connect prefixes together in the natural way, e.g., $w_i=ab$ would induce three states $q_\epsilon, q_a, q_{ab}$ and the path $q_\epsilon \to_a q_a \to_b q_{ab}$ (see top row in \autoref{fig:dfa_img} for an example).

To build the prefix tree, we sample a new training set of strings $\{w_i\}^m_{i=1}$, and record as labels $\mathbf{\hat y}(w_i)$, i.e., whether each prefix of every $w_i$ is a valid string in $L$. Note that this training set is distinct (and generally much smaller) than the training set used to train the RNN.
After its construction, we identify with each state $q_j$ a feature vector $\phi(q_j) = \mathbf h_{\lvert w\rvert}(w)$, where $w$ is the prefix corresponding to $q_j$.

% \paragraph{\new{DFA from Prefix Tree}} Given the prefix tree, we construct a DFA that contains the same nodes/states as the trie, and the transitions reflect the creation of the words (as in the trie). The only difference is that, now, we mark each state as final or not (as the definition of a DFA requires) based on whether the prefix that leads to this state belongs to our language or not. 

\subsection{Merging States}

Once the prefix tree is built, we define a policy $\pi(q_i, q_j)$ that compares states, and predicts whether or not to merge them. Let $\kappa \in (0, 1)$ be a hyperparameter. We specify $\pi$ to merge $q_i \to q_j$ when both of the following two constraints are met:
\begin{enumerate}
    \item \textbf{Consistency:} $q_i \in F \iff q_j \in F$
    \item \textbf{Similarity:} $\mathrm{cos}(\phi(q_i), \phi(q_j)) > 1 - \kappa$ 
    % \item \textbf{Edge Agreement:} For all tokens $\sigma$, if there exists an in/out edge touching $q_i$ and an in/out edge touching $q_j$, the the other state on the two edges must be the same.
    % \item \textbf{Faithfulness:} After merging $q_i \to q_j$, the new automaton must have $100\%$ agreement with the RNN predictions on the train set.
\end{enumerate}
The consistency constraint is standard in grammar induction: it guarantees that each step preserves the performance of the automaton across observed positive and negative examples, and thus that the new automaton is consistent with the behavior of the RNN on the training set. We add the similarity constraint to enforce that the automaton's representations reflect the true structure of the underlying state space in the RNN. Thus, two states are merged if and only if doing so would preserve recognition behavior on the training set and reflects the internal structure of the RNN state space.

If both of these conditions are met, then we merge $q_i \to q_j$. To do this, we delete the state $q_i$ from the graph, and choose the transitions to/from $q_j$ by taking the union of all transitions involving $q_i$ or $q_j$.\footnote{This procedure may yield a \emph{non-deterministic finite automaton} which is equivalent though to a DFA \citep{kleene1956automata}.}.
This merge operation is not fully symmetric, since the representation $\phi(q')$ is inherited from $q_j$ after merging $q_i$ and $q_j$. On the other hand, the conditions to merge two states are defined symmetrically. Thus, the algorithm will potentially reach different results depending on the enumeration order for $q_i$ and $q_j$. In practice, this will not be an issue, as long as $\kappa$ is set sufficiently high, since in this case, the vectors $\phi(q_i)$ and $\phi(q_j)$ will be effectively equivalent from the point of view of the algorithm.

% \paragraph{Learned Constraints} As defined above, our policy is completely hard-coded in terms of constraints. For future work, it would also be possible to replace the similarity constraint with a learned constraint, using the vectors $\phi(q_i)$ and $\phi(q_j)$ as input features to predict a similarity score. Such a policy could be trained with reinforcement learning to maximize reward defined as faithfulness to the original RNN on a held-out set, regularized by the number of states of the current automaton. We defer this alternate approach to future work.

\subsection{Postprocessing}

Finally, after reducing the automaton via state merging, we can apply a DFA minimization step \citep{hopcroft1971n} to reduce the size of the extracted DFA while preserving the language it recognizes. DFA minimization is an operation that takes a regular language defined by a DFA and returns the DFA with the smallest number of states that recognizes that regular language, which is unique up to isomorphism. Thus, DFA minimization is semantically different than state merging: while state merging preserves recognition decisions over the training set, minimization is guaranteed to preserve recognition decisions over all strings. Thus, applying minimization alone to the initial prefix tree is not able to produce a DFA that generalizes beyond the training set. Our goal in applying minimization after state merging is to make the behavior of the resulting automaton easier to visualize and evaluate without changing it.

\subsection{Theoretical Motivation}
Our proposed algorithm is justified in the sense that it extracts the state transitions of the saturated version of the RNN it receives as input.
% The intuition on why the proposed algorithm works lies at the connection between saturated RNNs and DFAs;
A saturated RNN with a $\tanh$ non-linearity will have state vectors $\mathbf{h} \in \{\pm 1 \}^d$. Thus, a saturated RNN has a finite number of states over which the update rule \eqref{eq:state_transition} acts as DFA transition function \citep[cf.][]{merrill-2019-sequential}.
% So each $\mathbf{h}$ encodes one of the up to $2^d$ different states of the DFA, and
Given the discontinuity of the RNN state space, a cosine similarity greater than $\frac{d-1}{d}$ ensures that two state vectors are the same. Trained RNNs have been found to become approximately saturated \citep{karpathy2015rnns}, suggesting the saturated network should closely capture their behavior. The following proposition, whose details and proof can be found in the Appendix, captures this intuition, while it also provides a way to select the similarity hyperparameter $\kappa$ based on the level of the saturation of the RNN.

% Possible reference to cite:
% https://arxiv.org/pdf/1506.02078.pdf

\begin{proposition}
Let $\mathbf{h}_1, \mathbf{h}_2 \in \mathbb{R}^d$ be two normalized state vectors, $\Tilde{\mathbf{h}}_1, \Tilde{\mathbf{h}}_2 \in \{\pm 1\}^d$ their saturated versions and assume that the RNN is $\epsilon$-saturated with respect to these states, i.e., $\norm{\mathbf h_i - \Tilde{\mathbf{h}}_i}_2 \leq \epsilon$, $i \in \{1, 2\}$. Then, if $\cos(\mathbf h_1, \mathbf h_2) \geq 1 - \kappa$ with $\sqrt{\kappa} < \sqrt{2} \left( \frac{1}{\sqrt{d}} - \epsilon \right)$,
% \begin{equation*}
%     \sqrt{\kappa} < \sqrt{2} \left( \frac{1}{\sqrt{d}} - \epsilon \right),
% \end{equation*}
the two vectors represent the same state on the DFA / saturated RNN ($\Tilde{\mathbf{h}}_1 = \Tilde{\mathbf{h}}_2$).
% $\mathbf h_1, \mathbf h_2 \in \mathbb R^d$ be two normalized RNN state vectors that satisfy $\cos(\mathbf h_1, \mathbf h_2) \geq 1 - \kappa$ and $\norm{\mathbf h_i - \Tilde{\mathbf{h}}_i}_2 \leq \epsilon$ for $i \in \{1, 2\}$. Then $\Tilde{\mathbf{h}}_1 = \Tilde{\mathbf{h}}_2$ if
% \begin{equation*}
%     \sqrt{\kappa} < \sqrt{2} \left( \frac{1}{\sqrt{d}} - \epsilon \right) .
% \end{equation*}
\end{proposition}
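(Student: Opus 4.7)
The strategy is to convert the cosine-similarity hypothesis into a Euclidean distance bound, combine it with the saturation assumption via the triangle inequality, and then exploit the discreteness of the saturated state space to force equality.

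First, I would use the identity $\|\mathbf{h}_1-\mathbf{h}_2\|_2^{2}=2-2\cos(\mathbf{h}_1,\mathbf{h}_2)$, valid for unit-norm vectors, to turn the hypothesis $\cos(\mathbf{h}_1,\mathbf{h}_2)\geq 1-\kappa$ into the estimate $\|\mathbf{h}_1-\mathbf{h}_2\|_2\leq\sqrt{2\kappa}$. Combining this with the two $\epsilon$-saturation bounds via the triangle inequality yields
\[
\|\tilde{\mathbf{h}}_1-\tilde{\mathbf{h}}_2\|_2\leq\|\tilde{\mathbf{h}}_1-\mathbf{h}_1\|_2+\|\mathbf{h}_1-\mathbf{h}_2\|_2+\|\mathbf{h}_2-\tilde{\mathbf{h}}_2\|_2\leq 2\epsilon+\sqrt{2\kappa}.
\]

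Second, I would lower-bound the distance between any two \emph{distinct} saturated states. With the normalization convention under which the $\mathbf{h}_i$ are unit vectors, the saturated states are also unit vectors whose entries lie in $\{\pm 1/\sqrt{d}\}$, so flipping even a single coordinate contributes $4/d$ to the squared distance. Hence whenever $\tilde{\mathbf{h}}_1\neq\tilde{\mathbf{h}}_2$ we must have $\|\tilde{\mathbf{h}}_1-\tilde{\mathbf{h}}_2\|_2\geq 2/\sqrt{d}$.

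Third, if $\tilde{\mathbf{h}}_1\neq\tilde{\mathbf{h}}_2$, chaining the two bounds would force $2/\sqrt{d}\leq 2\epsilon+\sqrt{2\kappa}$, i.e., $\sqrt{\kappa}\geq\sqrt{2}\bigl(1/\sqrt{d}-\epsilon\bigr)$, directly contradicting the hypothesis on $\kappa$. Therefore $\tilde{\mathbf{h}}_1=\tilde{\mathbf{h}}_2$, as desired.

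The only subtle point is the normalization convention: the statement writes $\tilde{\mathbf{h}}_i\in\{\pm 1\}^{d}$, but the quantitative bound $\sqrt{\kappa}<\sqrt{2}(1/\sqrt{d}-\epsilon)$ is only consistent if both $\mathbf{h}_i$ and $\tilde{\mathbf{h}}_i$ are compared on the same scale (unit norm), so one has to carry the $1/\sqrt{d}$ factor coming from normalizing a sign pattern through the argument. Beyond that bookkeeping, the whole proof is an elementary triangle-inequality chase and presents no real difficulty.
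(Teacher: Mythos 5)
Your proof is correct and follows essentially the same route as the paper's: the unit-norm identity $\norm{\mathbf h_1-\mathbf h_2}_2^2 = 2(1-\cos(\mathbf h_1,\mathbf h_2))$, the triangle inequality through the saturated vectors, and the $2/\sqrt{d}$ minimum separation between distinct saturated states. Your remark about the normalization convention is well taken — the paper states $\Tilde{\mathbf h}_i \in \{\pm 1\}^d$ but implicitly uses the rescaled $\pm 1/\sqrt{d}$ entries to get the $2/\sqrt{d}$ separation, a bookkeeping point you handle more explicitly than the paper does.
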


In practice, one can measure the level of saturation \citep{merrill2020parameter} and select $\kappa$ from the expression above, but in our experiments we found that is not necessary, as a very small value of $\kappa$, together with the postprocessing step of DFA minimization suffices for successful DFA extraction.

\section{Data and Models} \label{sec:data}

\subsection{Tomita Languages}

The Tomita languages are a standard formal language benchmark used for evaluating grammar induction systems \citep{tomita1982dynamic} and RNN extraction \citep{weiss2018extracting, wang2018empirical}. Specifically, the benchmark consists of seven regular languages. All languages are defined over the binary alphabet $\Sigma_2 = \{a, b\}$. The languages are numbered $1$-$7$ such that the difficulty of learnability (in an intuitive, informal sense) increases with number. Slight variation exists in the definition of these languages; we use the version reported by \citet{weiss2018extracting}, which is fully documented in \autoref{tab:tomita}.

\begin{table}[!t]
    \centering
    \begin{tabular}{|c|c|}
        \hline
        \# & Definition \\
        \hline
        1 & $a^*$ \\
        2 & $(ab)^*$ \\
        3 & Odd \# of $a$'s must be followed by even \# of $b$'s \\
        4 & All strings without the trigram $aaa$ \\
        5 & Strings $w$ where $\#_a(w)$ and $\#_b(w)$ are even \\
        6 & Strings $w$ where $\#_a(w) \equiv_3 \#_b(w)$ \\
        7 & $b^*a^*b^*a^*$ \\
        \hline
    \end{tabular}
    \caption{Definitions of the Tomita languages. Let $\#_\sigma(w)$ denote the number of occurrences of token $\sigma$ in string $w$. Let $\equiv_3$ denote equivalence mod $3$. $\abs{Q}$ denotes the number of states in the minimum DFA for each language.}
    \label{tab:tomita}
\end{table}

\subsection{Training Details}

To train RNN language recognizers for some formal language $L$, we need data that supervises which strings fall in $L$. Fixing a maximum sequence length $n$, we sample data $\{(x, \mathbf y)\}$, where $x \in \Sigma_2^n$ is a string, and $\mathbf y \in \{0, 1\}^{n+1}$ is a zero-indexed vector of language recognition decisions for each prefix of $n$. For example, given $x=ab$,
\begin{align*}
    y_0 = 1 &\iff \epsilon \in L \\
    y_1 = 1 &\iff a \in L \\
    y_2 = 1 &\iff ab \in L .
\end{align*}
where $\epsilon$ denotes the empty string. To enforce that the dataset is roughly balanced across sequence lengths, we sample half the $x$ uniformly over $\Sigma_2^n$, and, for the other half, enforce that the full string $x$ must be valid in $L$. Given some $x$, the $\mathbf{y}$'s are deterministic to compute. We use a string length of $n=100$ for the training set (with $100,000$ examples), and $n=200$ for a development set (with $1,000$ examples).

We train the RNNs for 22 epochs, choosing the best model by validating with accuracy on a development set.\footnote{As the RNNs converge to 100\% accuracy quickly, we break ties by taking the \emph{highest} epoch to achieve 100\%, which in all cases turns out to be the final epoch.} The architecture consists of an embedding layer (dimension $10$), followed by an RNN layer (dimension $100$), followed by a linear classification head that predicts language membership for each position in the sequence. We use the AdamW optimizer with default hyperparameters.

% \subsection{Training Results} \label{sec:rnns}

Before evaluating our method for RNN extraction, we verify that our trained RNNs reach 100\% accuracy. We do this using a held-out generalization set. The sequence length is greater in the generalization set than in training, so strong performance requires generalizing to new lengths. In practice, we find that all RNN language recognizers converge to $100\%$ generalization accuracy within a few epochs.

\section{Extraction Results} \label{sec:eval}

In this section, we evaluate the proposed merging method on how well it describes the behavior of the original RNN. More specifically, we first assess whether the extracted DFA matches the predictions of the RNN on the Tomita languages, and then investigate how different hyperparameters, like the number of data used to built the trie and the dissimilarity tolerance $\kappa$, affect the final output of our algorithm. In summary, we find that our method can extract DFAs matching the original RNN across all 7 languages.

% One way to evaluate our method is to check whether the performance of automata extracted from RNNs matches the original RNNs' performance. Specifically, we will evaluate this using the RNNs we have trained on the Tomita languages.

\subsection{Extraction Details}

To build the prefix tree for each language, we use training strings of length 10 where each one is either a member of the language or a random string (with equal probability), and, then, use the trained RNN to compute labels and the representations for each state. We then apply state merging to compress the trie, and the final DFA is evaluated on a held-out set that contains $1, 000$ strings of uniform random length between 0 and 50. Unless otherwise stated, we set $\kappa=0.01$. Finally, a note on the training data: we vary the number of them to evaluate our algorithm's dependence on it, but the number of examples is always orders of magnitude smaller than the number used to train the RNN.

\subsection{Extraction Faithfulness}\label{subsec:extr_faith}

As expected, the merged DFA retains the initial performance over the training set, ensured by the \textbf{Consistency} constraint, while the \textbf{Similarity} one furnishes it with generalization capabilities. As a sanity check, see \autoref{fig:tom2n5_sanity} (left) for the accuracy of the extracted DFA for Tomita 2, one of the ``easy" languages, vs. the accuracy of the initial prefix tree. For Tomita 5, a harder language, the method requires approximately $25$ training strings in the prefix tree to merge down to the perfectly correct DFA (right side of \autoref{fig:tom2n5_sanity}). In general, while the prefix tree may reject previously unseen negative examples, it will never accept previously unseen positive ones. This is the reason why we observe such a large gap in development accuracy between the initial prefix tree and the final merged DFA.

In \autoref{tab:accuracy}, we summarize the results of the extraction on all languages for a fixed number of training data ($n = 300$). We see that in almost all cases the extracted DFA matches the predictions of the RNN. For Tomita 1-6, the algorithm always finds the correct DFA. For Tomita 7, it achieves near 100\% accuracy on every run and recovers the fully correct DFA 3/5 times. We conclude that our algorithm returns a faithful descriptor of the RNN language recognizers for the Tomita languages.
\autoref{tab:accuracy} also compares against a $k$-means baseline \citep{wang2018empirical}.
See \autoref{sec:baseline} for details.
As shown, $k$-means finds the correct DFA for Tomita 1-6, but performs roughly at chance for Tomita 7.
% As mentioned before, those results rely on a cross validation approach for selecting an optimal hyperparameter $\kappa$.
% , and that is more reliably successful for the easier languages.

Our results are not directly comparable with \citet{weiss2018extracting} on the same data, as their method learns from active membership and equivalence queries, while ours is designed for the more constrained setting of a static dataset.
However, we note that \citet{weiss2018extracting} were also able to extract faithful DFAs for all $7$ Tomita languages using their $L^\star$ method.
Unlike $L^*$, however, our method does not make use of potentially expensive equivalence queries.

\begin{table}[!t]
    \centering
    \begin{tabular}{|c|cc|cc|c|}
        \hline
        & \multicolumn{2}{|c|}{State merging} & \multicolumn{2}{|c|}{$k$-means} & \\
        \# & Acc & $\abs{\hat Q}$ & Acc & $\abs{\hat Q}$ & $\abs{Q}$ \\
        \hline
        1 & 100. $\pm$ 0. & 1 & 100. $\pm$ 0. & 1 & 1 \\
        2 & 100. $\pm$ 0. & 2 & 100. $\pm$ 0. & 2 & 2 \\
        3 & 100. $\pm$ 0. & 4 & 100. $\pm$ 0. & 4 & 4 \\
        4 & 100. $\pm$ 0. & 3 & 100. $\pm$ 0. & 3 & 3 \\
        5 & 100. $\pm$ 0. & 4 & 100. $\pm$ 0. & 4 & 4 \\
        6 & 100. $\pm$ 0. & 3 & 100. $\pm$ 0. & 3 & 3 \\
        7 & {\bf 99.62 $\pm$ 0.55} & {\bf 4} & 57.35 $\pm$ 0.25 & 1 & 4 \\
        \hline
    \end{tabular}
    \caption{Mean accuracy together with standard deviation of the extracted DFA on the 7 Tomita languages. Randomness induced by 5 random seeds for sampling data to build the prefix tree. ``$\abs{\hat Q}$'' is the smallest extracted DFA size after minimization. ``$\abs{Q}$'' reports the size of the true minimum DFA for each language. State merging is our method; $k$-means is a baseline based on \citet{wang2018empirical}.}
    \label{tab:accuracy}
\end{table}

\begin{figure*}[!ht]
    \centering
    \includegraphics[width=0.49\linewidth]{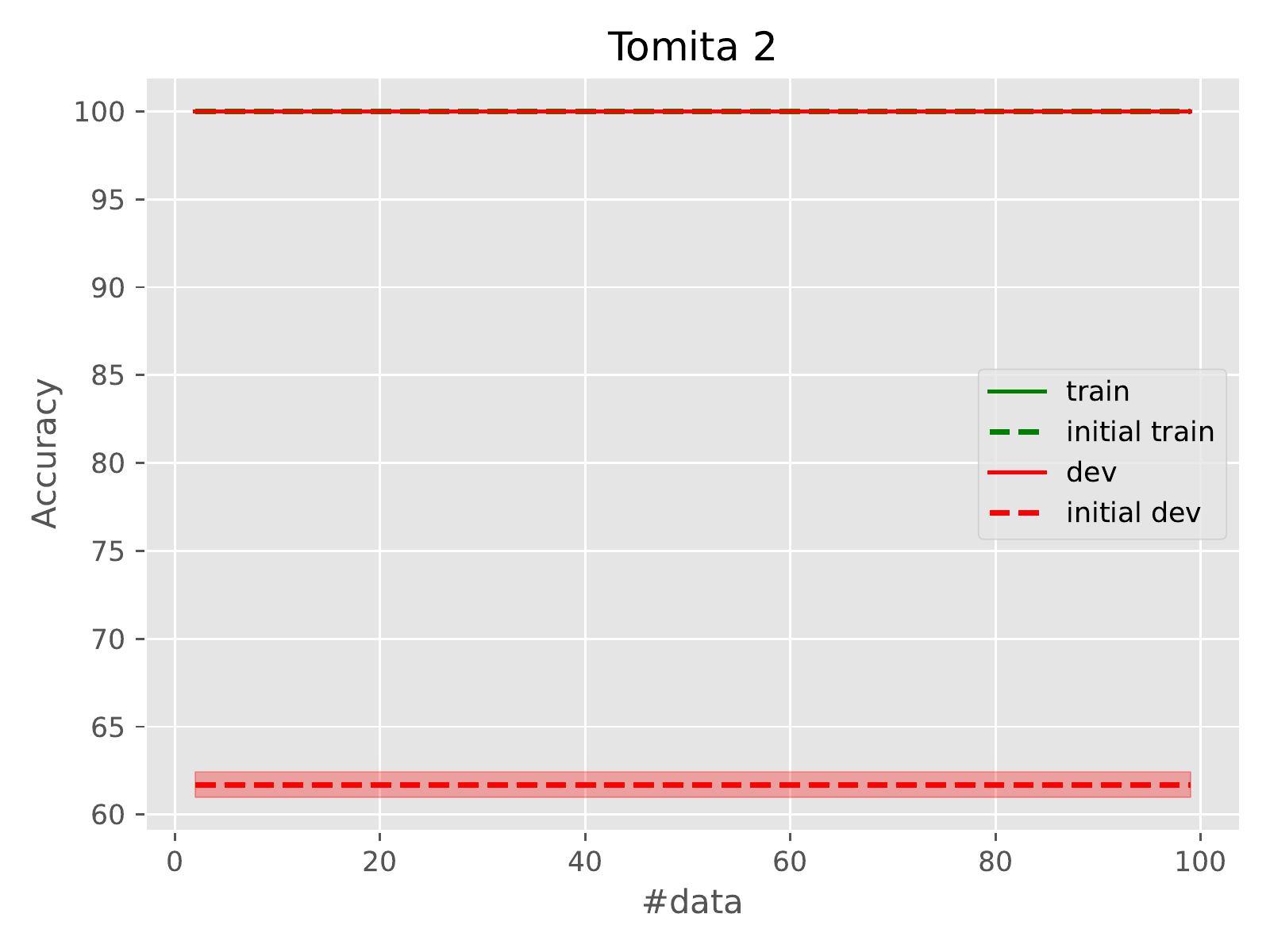}
    \includegraphics[width=0.49\linewidth]{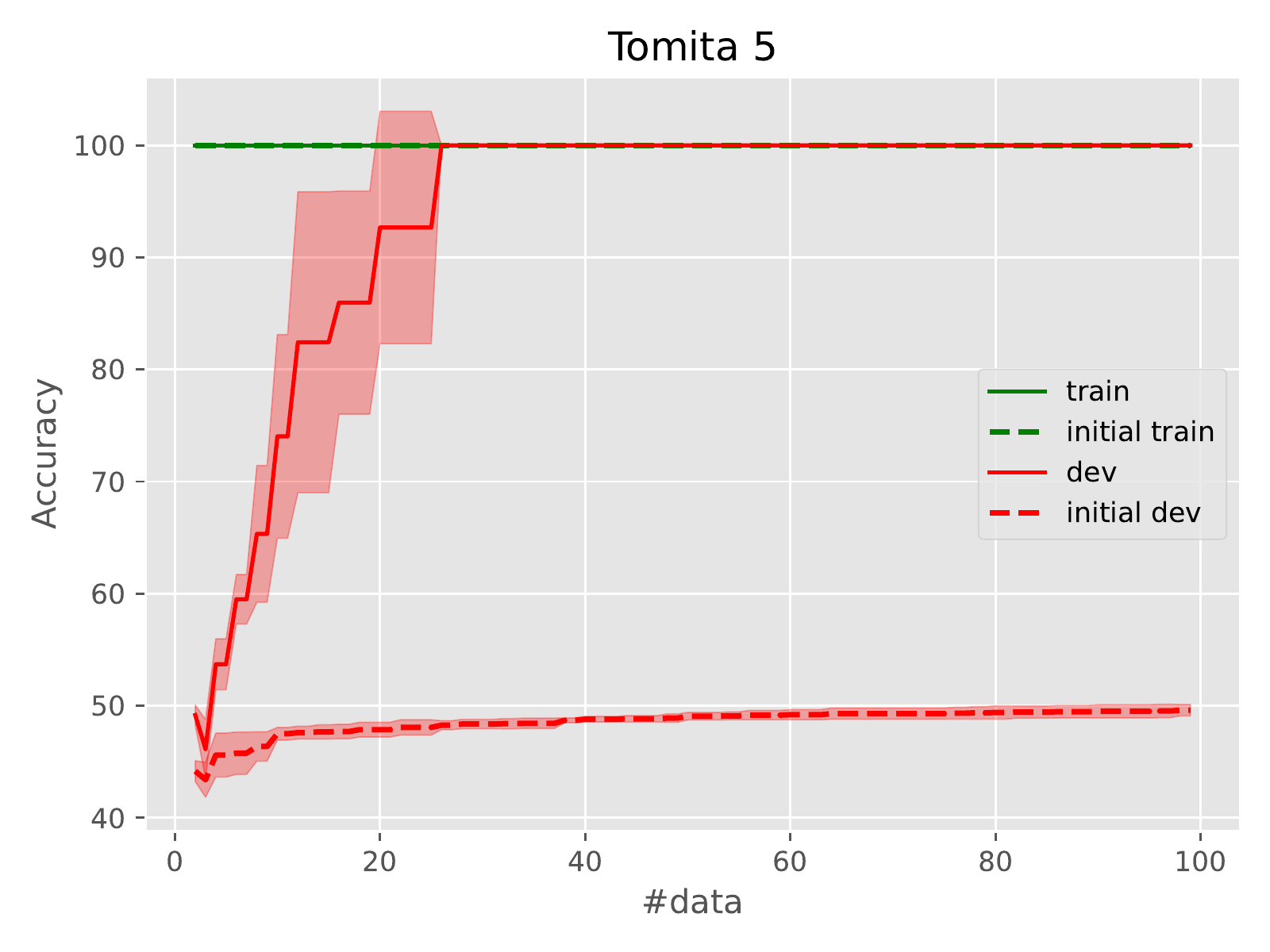}
    \caption{Faithfulness of extracted DFA. For Tomita 2 (left) and Tomita 5 (right), we extract DFAs that (i) are consistent with the training set, and (ii) reach 100$\%$ dev accuracy. Notice that the initial prefix tree records trivial accuracy on the dev set (red line). Trend line shows the average across $3$ random seeds (different datasets), and the shaded region denotes one std deviation.}
    \label{fig:tom2n5_sanity}
\end{figure*}

\subsection{Effect of Similarity Threshold}

Now, we assess qualitatively the role of the similarity threshold $\kappa$ on the output of the state merging algorithm. We use as case study Tomita 2, whose recognizing automata are easy to visualize and interpret.
% \nikos{We would prob want to change this in an actual paper submission}.
As we see in Figure \ref{fig:dfa_img}, the choice of $\kappa$ affects crucially the output of the algorithm. With large tolerance $\kappa = .5$, we ``overmerge" states resulting with a trivial 2-state DFA, that accepts only the empty string. Gradually decreasing $\kappa$ produces the desirable effect. For instance, for $\kappa = .4$ we find an almost minimal DFA that describes our language.
% \footnote{The introduction of negative examples in the synthesis of the prefix tree, most likely yields a resulted DFA with a total transition function. If we consider DFAs with partial functions, then the minimum automaton for Tomita 2 contains solely two states.}
Finally, for a very strict threshold $1 - 0.99$ that decides only to merge states whose $100$-dimensional representations are very well aligned, we recover a correct, but highly redundant DFA. Applying minimization to this DFA produces the correct $2$-state DFA. The connection between the number of states in the unminimized DFA and the quality of the representations afforded by the RNN is further discussed in Section \ref{sec:beyond_conv}.

\begin{figure}
    \begin{subfigure}[b]{0.99\columnwidth}
        \centering
            \resizebox{0.99\columnwidth}{!}{\begin{tikzpicture}[->,>=stealth',shorten >=1pt,auto,node distance=2cm, semithick]
        \node[initial,state,accepting] (0) {$q_0$};
        \node[state,above right of=0] (1) {$q_1$};
        \node[state,below right of=0] (2) {$q_2$};
        \node[state,accepting,right of=1] (3) {$q_3$};
        \node[state,right of=3] (4) {$q_4$};
        \node[state,accepting,above right of=4] (5) {$q_5$};
        \node[state,right of=5] (7) {$q_7$};
        \node[state,accepting,right of=7] (8) {$q_8$};
        \node[state,right of=8] (9) {$q_9$};
        \node[state,accepting,right of=9] (10) {$q_{10}$};
        \node[state,right of=10] (11) {$q_{11}$};
        \node[state,accepting,right of=11] (12) {$q_{12}$};
        \node[state,below right of=4] (6) {$q_6$};
        \node[state,right of=6] (22) {$q_{22}$};
        \node[state,right of=22] (23) {$q_{23}$};
        \node[state,right of=23] (24) {$q_{24}$};
        \node[state,right of=24] (25) {$q_{25}$};
        \node[state,right of=25] (26) {$q_{26}$};
        \node[state,right of=26] (27) {$q_{27}$};
        \node[state,right of=2] (13) {$q_{13}$};
        \node[state,right of=13] (14) {$q_{14}$};
        \node[state,right of=14] (15) {$q_{15}$};
        \node[state,right of=15] (16) {$q_{16}$};
        \node[state,right of=16] (17) {$q_{17}$};
        \node[state,right of=17] (18) {$q_{18}$};
        \node[state,right of=18] (19) {$q_{19}$};
        \node[state,right of=19] (20) {$q_{20}$};
        \node[state,right of=20] (21) {$q_{21}$};
        
        \path (0) edge [above] node {$a$} (1);
        \path (0) edge [above] node {$b$} (2);
        \path (1) edge [above] node {$b$} (3);
        \path (3) edge [above] node {$a$} (4);
        \path (4) edge [above] node {$b$} (5);
        \path (5) edge [above] node {$a$} (7);
        \path (7) edge [above] node {$b$} (8);
        \path (8) edge [above] node {$a$} (9);
        \path (9) edge [above] node {$b$} (10);
        \path (10) edge [above] node {$a$} (11);
        \path (11) edge [above] node {$b$} (12);
        \path (4) edge [above] node {$a$} (6);
        \path (6) edge [above] node {$b$} (22);
        \path (22) edge [above] node {$a$} (23);
        \path (23) edge [above] node {$b$} (24);
        \path (24) edge [above] node {$a$} (25);
        \path (25) edge [above] node {$a$} (26);
        \path (26) edge [above] node {$b$} (27);
        \path (2) edge [above] node {$b$} (13);
        \path (13) edge [above] node {$a$} (14);
        \path (14) edge [above] node {$b$} (15);
        \path (15) edge [above] node {$b$} (16);
        \path (16) edge [above] node {$b$} (17);
        \path (17) edge [above] node {$b$} (18);
        \path (18) edge [above] node {$b$} (19);
        \path (19) edge [above] node {$b$} (20);
        \path (20) edge [above] node {$a$} (21);
    \end{tikzpicture}}
    \caption{Prefix tree}
    \end{subfigure}
    \begin{subfigure}[b]{0.45\columnwidth}
        \centering
        \resizebox{0.99\columnwidth}{!}{\begin{tikzpicture}[->,>=stealth',shorten >=1pt,auto,node distance=2cm, semithick]
        \node[initial,state,accepting] (2) {$q_0$};
        \node[state,right of=2] (3) {$q_1$};
        
        \path (2) edge [above] node {$a, b$} (3);
        % \path (2) edge [bend right, below] node {$b$} (3);
        \path (3) edge [loop above] node {$a, b$} (3);
        % \path (3) edge [loop below] node {$b$} (3);
    \end{tikzpicture}}
    \caption{$\kappa=0.5$}
    \end{subfigure}
    \begin{subfigure}[b]{0.45\columnwidth}
        \centering
        \resizebox{0.99\columnwidth}{!}{\begin{tikzpicture}[->,>=stealth',shorten >=1pt,auto,node distance=2cm, semithick]
        \node[initial,state,accepting] (0) {$q_0$};
        \node[state,right of=0] (1) {$q_1$};
        \node[state,below of=0] (2) {$q_2$};
        \node[state,right of=2] (3) {$q_3$};
        
        \path (0) edge [bend left, above] node {$a$} (1);
        \path (1) edge [bend left, below] node {$b$} (0);
        \path (0) edge [left] node {$b$} (2);
        \path (2) edge [above] node {$b$} (3);
        \path (3) edge [loop right] node {$a, b$} (3);
        % \path (3) edge [loop below] node {$b$} (3);
    \end{tikzpicture}}
    \caption{$\kappa=0.4$}
    \end{subfigure}
    \begin{subfigure}[b]{0.9\columnwidth}
        \centering
        \resizebox{0.99\columnwidth}{!}{\begin{tikzpicture}[->,>=stealth',shorten >=1pt,auto,node distance=2cm, semithick]
        \node[initial,state,accepting] (0) {$q_0$};
        \node[state,above right of=0] (1) {$q_1$};
        \node[state,below right of=0] (2) {$q_2$};
        \node[state,accepting,right of=1] (3) {$q_3$};
        \node[state,right of=3] (4) {$q_4$};
        \node[state,accepting,right of=4] (5) {$q_5$};
        \node[state,right of=2] (6) {$q_6$};
        \node[state,right of=6] (7) {$q_7$};
        \node[state,right of=7] (8) {$q_8$};
        \node[state, above right of=8] (10) {$q_{10}$};
        \node[state, below right of=8] (9) {$q_{9}$};
        
        \path (0) edge [above] node {$a$} (1);
        \path (1) edge [above] node {$b$} (3);
        \path (3) edge [above] node {$a$} (4);
        \path (4) edge [bend left, above] node {$b$} (5);
        \path (5) edge [bend left, above] node {$a$} (4);
        \path (0) edge [above] node {$b$} (2);
        \path (2) edge [above] node {$b$} (6);
        \path (6) edge [above] node {$a$} (7);
        \path (7) edge [above] node {$b$} (8);
        \path (7) edge [bend right, below] node {$a$} (9);
        % \path (2) edge [above] node {$b$} (3);
        \path (8) edge [loop above] node {$b$} (8);
        \path (8) edge [above] node {$a$} (10);
        \path (10) edge [right] node {$a$} (9);
        \path (9) edge [loop below] node {$a$} (9);
        \path (9) edge [above] node {$b$} (8);
        % \path (3) edge [loop below] node {$b$} (3);
    \end{tikzpicture}}    
    \caption{$\kappa=0.01$}
    \end{subfigure}
    \caption{Initial prefix tree and resulting merged DFAs for different values of $\kappa$. Language: Tomita 2.}
    \label{fig:dfa_img}
\end{figure}
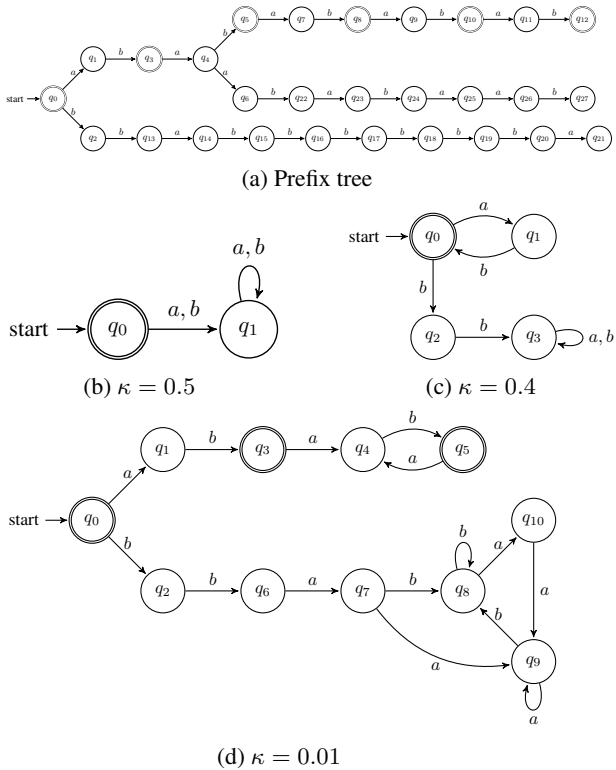

\begin{figure*}[!ht]
    \centering
    \includegraphics[width=.48\linewidth]{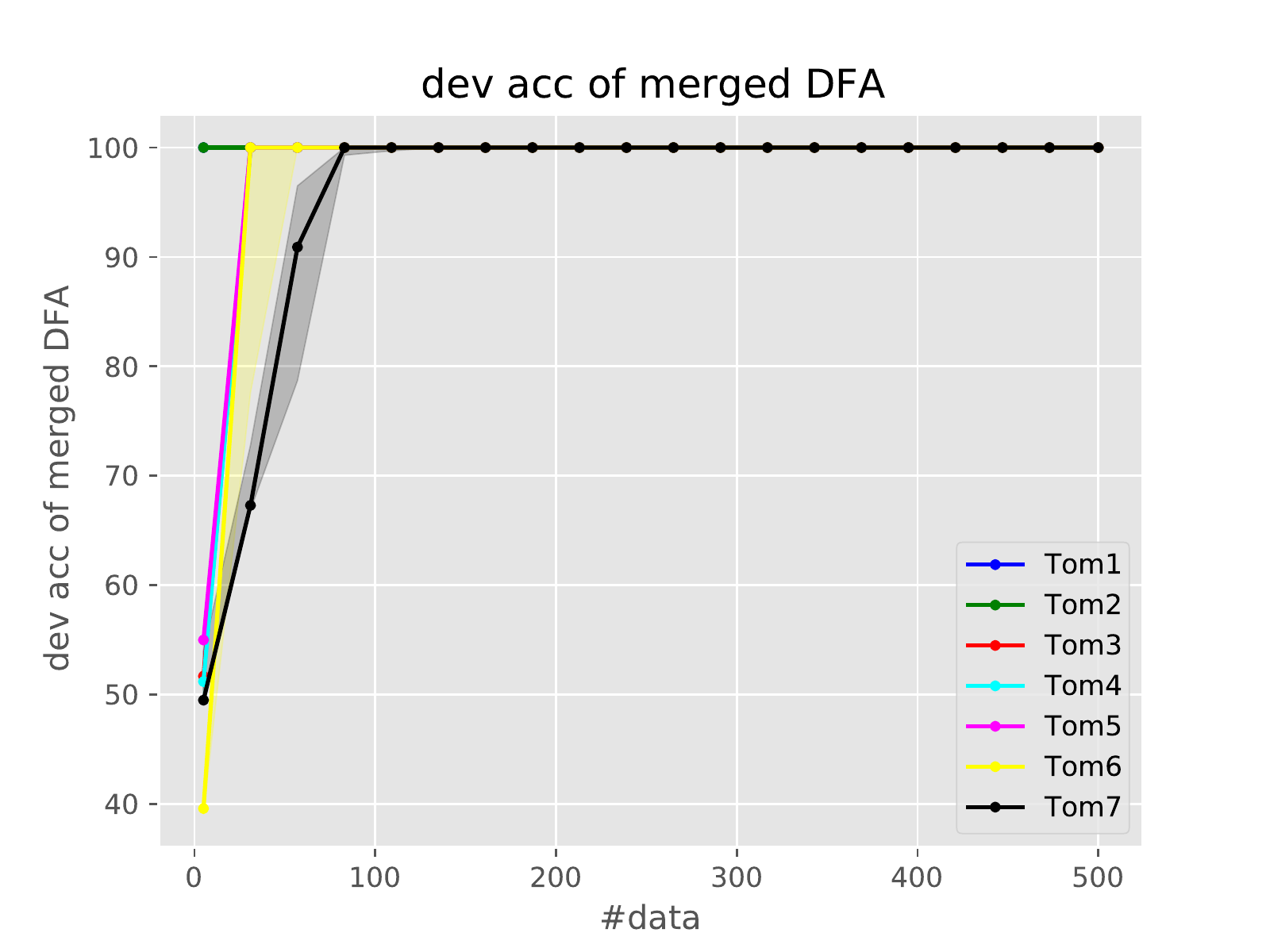}
    \includegraphics[width=.48\linewidth]{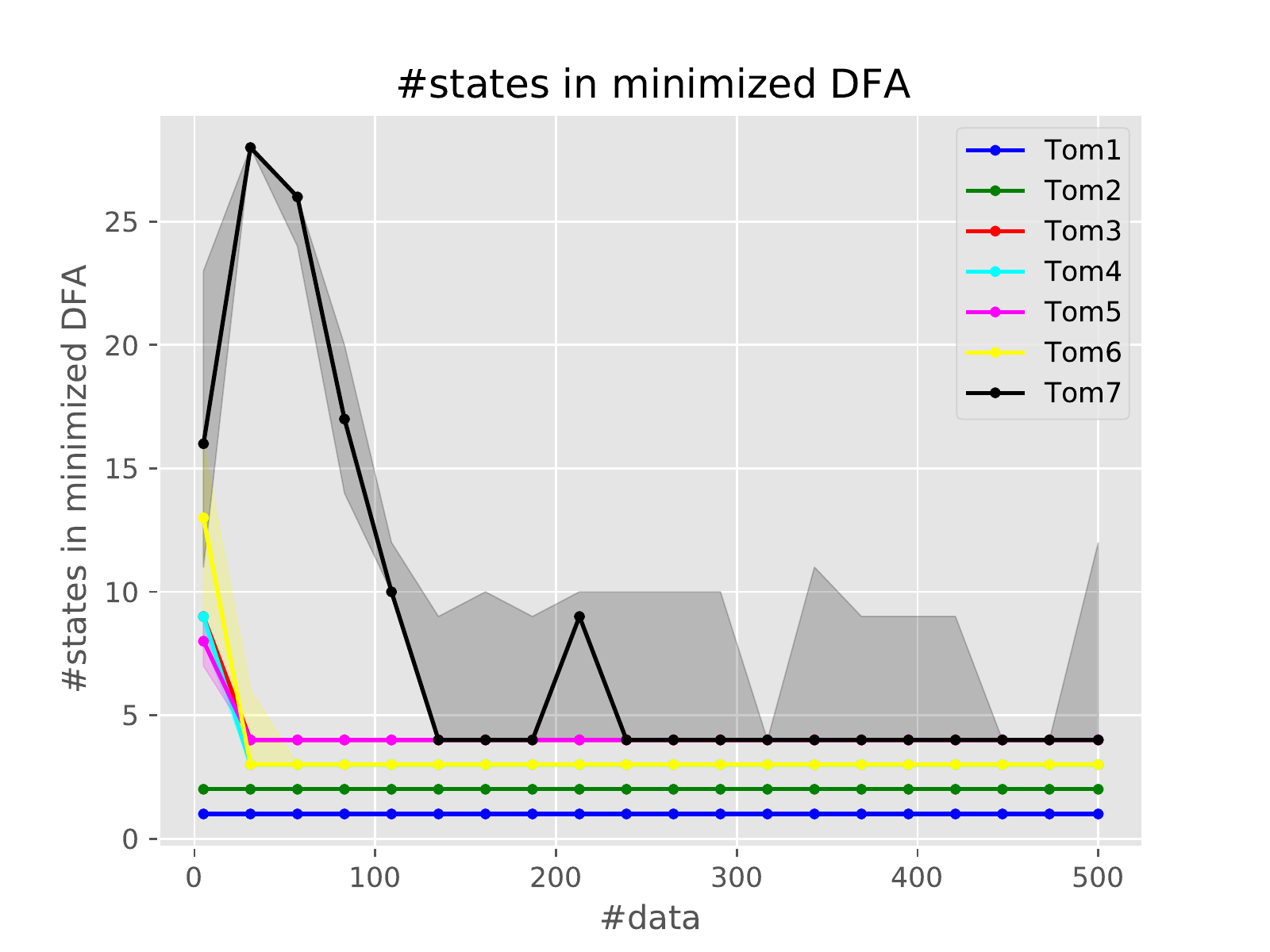}
    \caption{Left: The DFA's accuracy at reconstructing the RNN on the development set as the number of data used to build the prefix tree increases. Right: the number of states in the minimized DFA obtained through state merging, which plateaus for Tomita 1-6, and sometimes reaches the ideal value for Tomita 7. Trend line shows the median across $5$ random seeds (different datasets), and the $0.25$--$0.75$ percentile region is shaded. The prefix tree is built from sentences of length $15$ here.}
    \label{fig:by-data}
\end{figure*}

\subsection{Effect of Data Size}

Across all 7 Tomita languages, using more data to build the prefix tree improves our ability to extract the correct DFA. As seen in \autoref{fig:by-data}, our method reaches $100\%$ accuracy at matching the predictions of the RNN for all languages, although Tomita 7 has high variance even at the end. On the right side of \autoref{fig:by-data}, we show that the number of states in the final DFA decreases with the number of data, reaching the true minimum DFA for Tomita 1-6. For Tomita 7, we first recover the minimum DFA when using $135$ data points, but state merging does not always produce the true minimum DFA, even at 500 data points.

\subsection{Effect of Training Beyond Convergence}\label{sec:beyond_conv}

\begin{figure*}[!t]
    \centering
    \includegraphics[width=.48\linewidth]{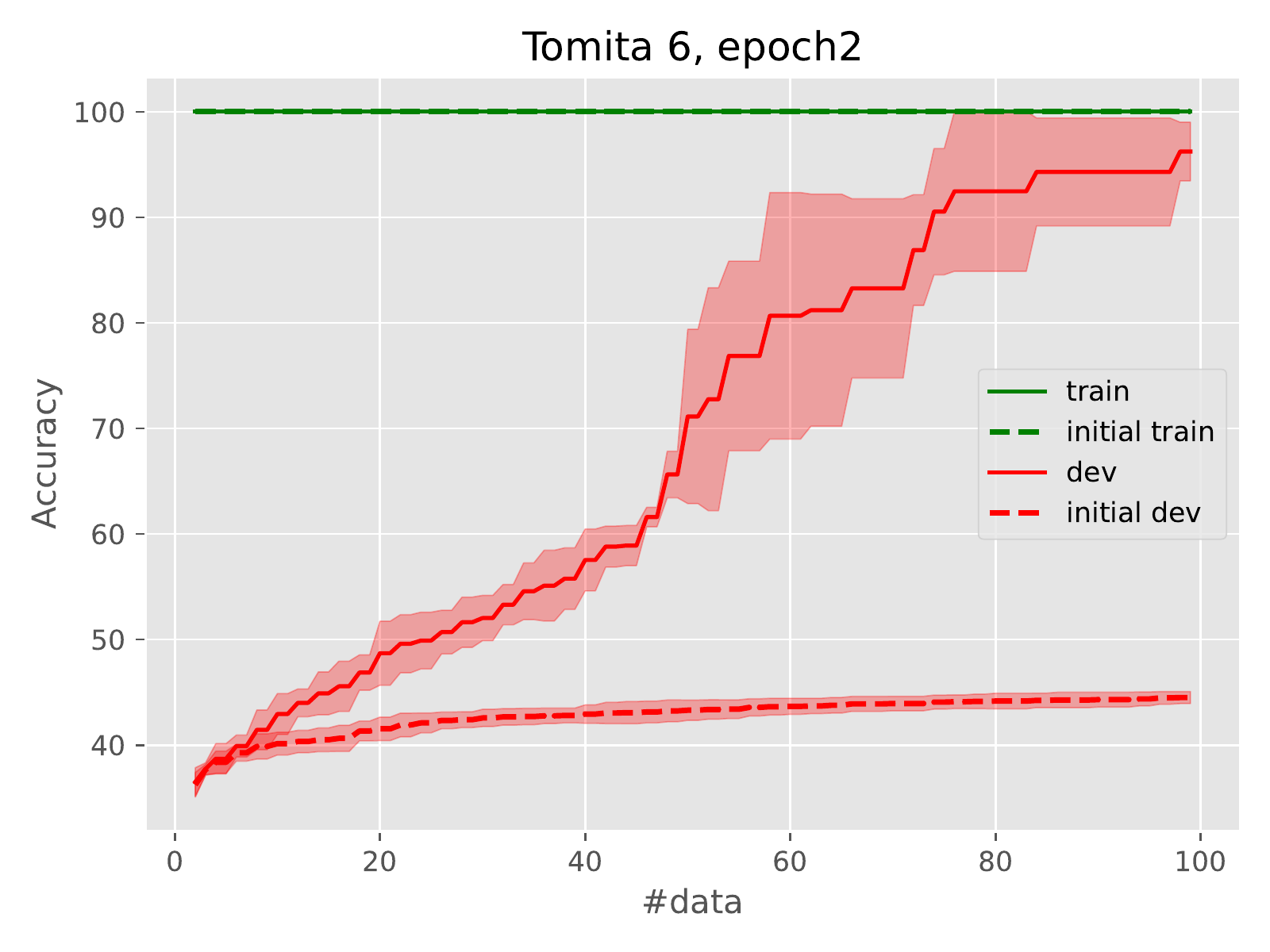}
    \includegraphics[width=0.48\linewidth]{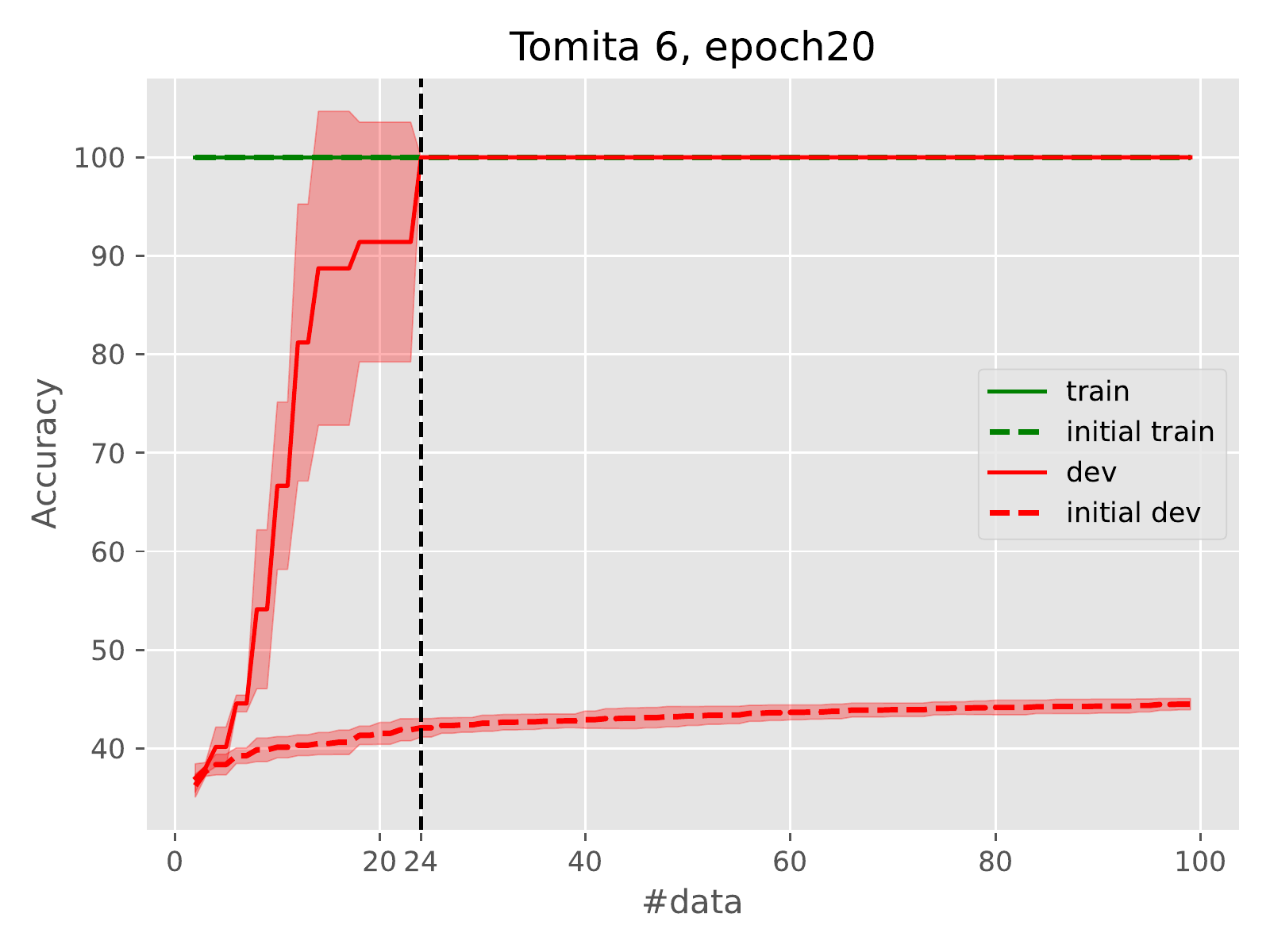}
    \caption{Accuracy of extracted DFA vs number of data for two different sets of representations for Tomita 6. Left: Representations from an RNN trained for 2 epochs. Right: Representations from an RNN trained for 20 epochs. The RNNs record the same dev accuracy, but the automata extracted from the longer-trained RNN has higher accuracy. Trend line shows the average across $3$ random seeds (different datasets), and the shaded region denotes one std deviation. $\kappa$ was set equal to .99.}
    \label{fig:tom6_dif_ep}
\end{figure*}

In all cases, our RNNs converged to 100\% development accuracy within one or two epochs. However, we find that continuing to train beyond this point can improve the ability of our method to extract the true gold-standard DFA.

First, we investigate the effect of continued training on learning Tomita 6, one of the ``difficult" Tomita languages, using representations extracted from an RNN after 2 and 20 epochs of training respectively. The results are shown in \autoref{fig:tom6_dif_ep}. Although the RNN development accuracy is 100\% at both checkpoints, the extraction results are now different; as the figure illustrates, a correct automaton can be extracted from the ``overly" trained network with much fewer data than its less trained ancestor. This suggests that additional training is somehow improving or simplifying the quality of the representations, even after development accuracy has converged to 100\%. The behavior across all 7 languages is similar; see \autoref{app:extras}.

% \nikos{We can consider adding some experiments with label noise here.}

\begin{figure*}[!t]
    \centering
    \includegraphics[width=.48\linewidth]{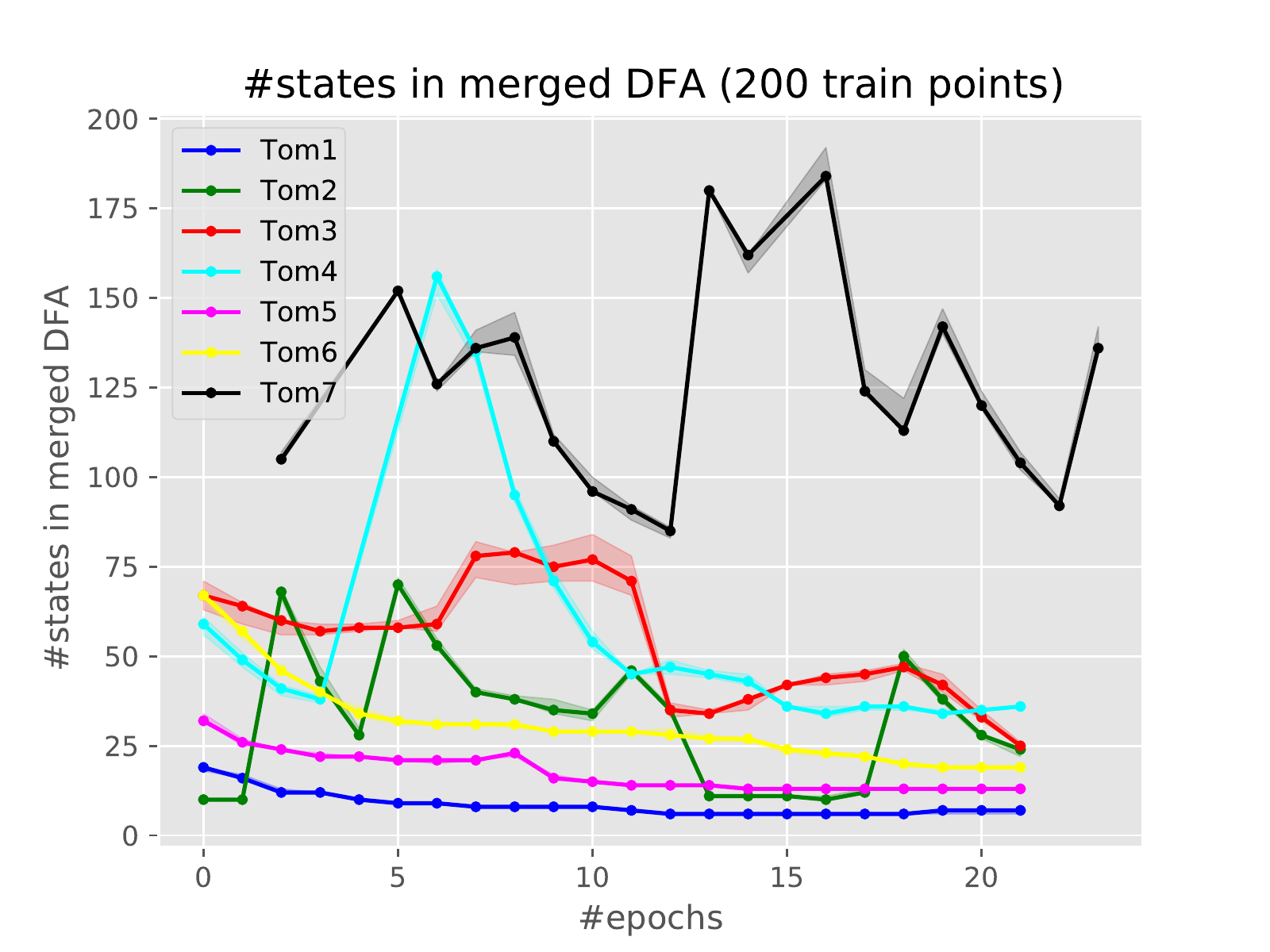}
    \includegraphics[width=.48\linewidth]{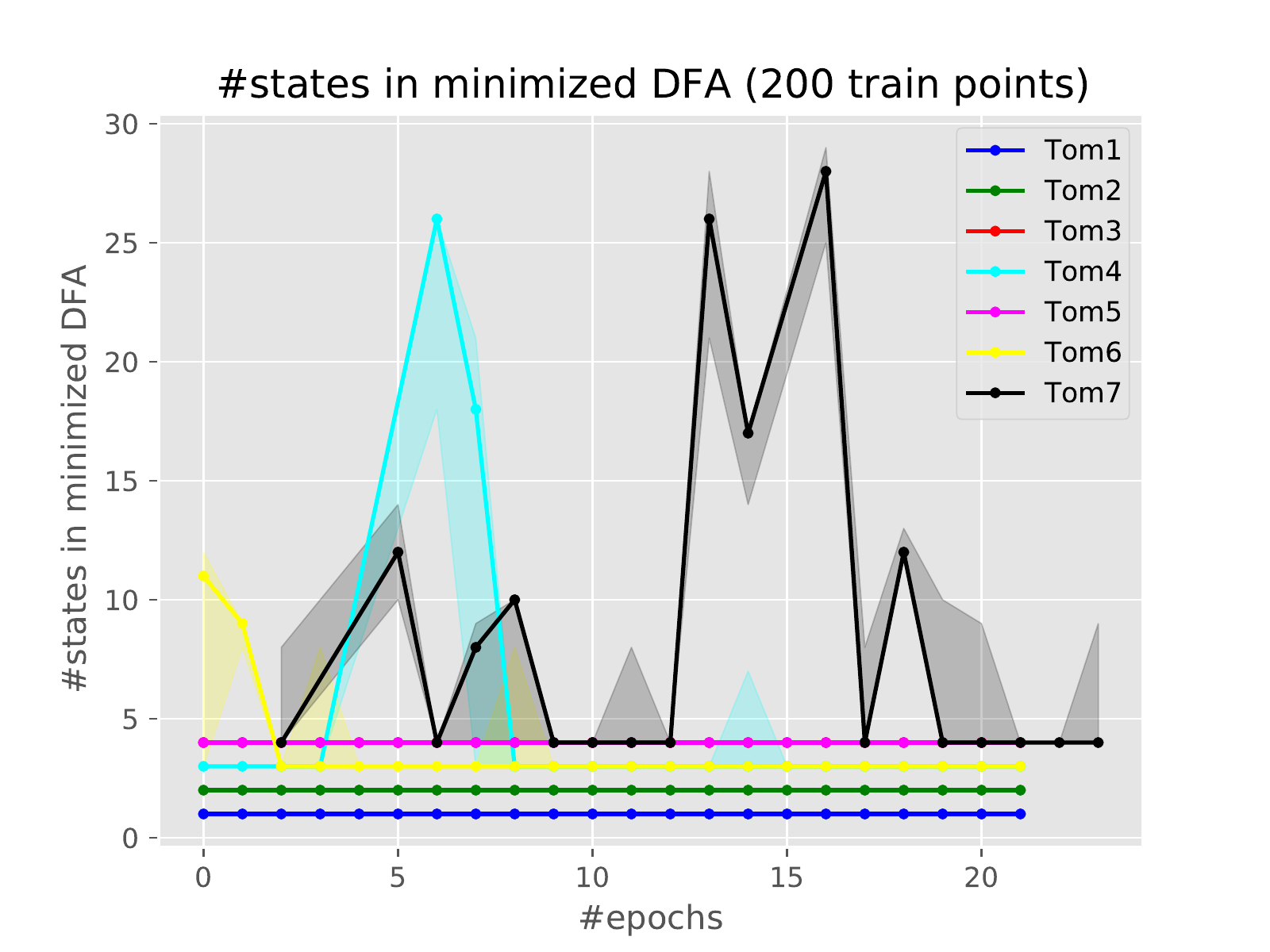}
    \caption{Number of states in the DFA achieved through state merging (left) and the minimized version of that DFA (right) training beyond convergence. As an artifact of our training logic, Tomita 7 was trained for 24 epochs, and epochs 0-1 were lost. The merged DFA size tend to gradually decrease for Tomita 1-6 with more training epochs, suggesting implicit merging during training. The minimized DFA reliably reaches the theoretical minimum DFA for all languages besides Tomita 7, \new{for which it finds the correct minimum DFA 3/5 times}.}
    \label{fig:n_states}
\end{figure*}

Next, we study the effect of continued training on the complexity of the extracted automaton, measured in the number of states. As shown in the left side of \autoref{fig:n_states}, the number of states in DFA obtained through state merging (but before minimization) tends to decrease gradually with more additional training for Tomita 1-6, despite some rapid upward spikes. This suggests that additional training is perhaps simplifying the structure of the RNN's state space by merging redundant states together. On the right, we can see that for Tomita 1-6, training for enough additional epochs brings the size of the minimized DFA down to the ideal minimum DFA size. Together, these results suggest that additional training may be simplifying the implicit RNN state space to remove redundant state representations, thereby improving our ability to extract the true minimum DFA for the language it recognizes. We call this phenomenon \textit{implicit merging} induced by the training procedure.

% \begin{figure*}[!t]
%     \centering
%     \includegraphics[width=0.48\linewidth]{images/dif-lang/acc-Tom1-0.99-best-preds.pdf}
%     \includegraphics[width=0.48\linewidth]{images/dif-lang/acc-Tom2-0.99-best-preds.pdf}
%     \includegraphics[width=0.48\linewidth]{images/dif-lang/acc-Tom3-0.99-best-preds.pdf}
%     \includegraphics[width=0.48\linewidth]{images/dif-lang/acc-Tom4-0.99-best-preds.pdf}
%     \includegraphics[width=0.48\linewidth]{images/dif-lang/acc-Tom5-0.99-best-preds.pdf}
%     \includegraphics[width=0.48\linewidth]{images/dif-lang/acc-Tom6-0.99-best-preds.pdf}
%     \includegraphics[width=0.48\linewidth]{images/dif-lang/acc-Tom7-0.99-best-preds.pdf}
%     \caption{Extraction accuracy of the DFA vs number of data for the Tomita languages.}
%     \label{fig:tom2_dif_ep}
% \end{figure*}

\paragraph{Speculative Explanations} Why should training beyond convergence lead to easier extraction? One potential explanation is the ``saturation'' phenomenon of neural net training \citep{karpathy2015visualizing, merrill2020parameter}: if training consistency increases the parameter $2$-norm (which we find to hold for our RNNs), then training for more time should lead the RNN to more closely approximate infinite-norm RNNs. Infinite-norm RNNs can be directly viewed as DFAs \citep{merrill-2019-sequential}, which may explain why it is easier to extract a DFA from a network trained significantly beyond convergence.
Implicit merging is also consistent with the information bottleneck theory of deep learning \cite{ZiTi17} that identifies two phases during training; first, a data-fitting period and then a compressing one. Interestingly, it has been observed that it is the saturating nature of non-linearities that yields the compression phase \cite{Sax+18}, further supporting our previous explanation.
More speculatively, it is possible that the benefit of training beyond convergence for extraction could be related to ``grokking'' \citep{power2021grokking}: a phenomenon where generalization on synthetic formal language tasks begins to improve after continuing to train for hundreds of epochs past convergence in training accuracy. Along these lines, it would be interesting for future work to continue investigating the mechanism through which training beyond convergence can improve the ease of RNN extraction, as this may provide interesting insight into the implicit regularization that RNNs receive during training.

\section{Conclusion} \label{sec:conclusion}

We have shown how state merging can be used to extract automata from RNNs that capture both the decision behavior and representational structure of the original blackbox RNN. Using state merging, we were able to extract faithful automata from RNNs trained on all 7 Tomita languages, demonstrating the effectiveness of our method. For future work, it would be useful to find ways to extend the state merging extraction algorithm to scale to larger state spaces and alphabet sizes. One interesting empirical finding is that continuing to train an RNN \emph{after} it has perfectly learned the target language improves the sample efficiency of extracting an automaton from it. Our analysis of this \emph{implicit merging} phenomenon suggests that training past convergence may lead to a more robust representation of the underlying state space within the RNN through an implicit regularizing effect where neighborhoods representing the same state converge to single vector representations. Under this view, gradient-descent-like training itself may be viewed as a state merging process: training may exude a pressure to compress similar states together, producing a simpler model that may generalize better to unseen strings.

% Seminal state merging papers
% Regular Positive Negative Inference: https://www.worldscientific.com/doi/abs/10.1142/9789812797902_0004
% Evidence-Driven State Merging: https://link.springer.com/chapter/10.1007/BFb0054059
% Mandatory Merges: https://www.info.ucl.ac.be/~pdupont/pdupont/pdf/icgi08.pdf

\bibliography{references.bib}

\newpage
\appendix
\onecolumn

\section{Baseline Method} \label{sec:baseline}

We describe the details of the $k$-means extraction method that we use as a baseline \citep{wang2018empirical}.

Taking a train set of the same form as state merging, we collect all the hidden states from every prefix of every train string, each of which is associated with a label, i.e., whether the prefix is in $L$. This yields a dataset of the form $\{(\mathbf h_{ij}, y_{ij})\}$, where $\mathbf h_{ij}$ is the RNN hidden state on word $j$ of example $i$, and $y_{ij}$ records whether $w_{i,:j} \in L$.

We apply $k$-means clusters to the hidden states, where $k$ is a hyperparameter that we set to $20$. Manual inspection reveals that the results are not particularly sensitive to $k$, which makes sense given the fact that all Tomita languages require at most $7$ states. We identify each cluster with a state in that DFA that will be extracted. We then need to decide which cluster is the initial state, and, for each cluster, whether it is accepting, and to which clusters it transitions for each input token. We find the initial state by checking which cluster is assigned to the \texttt{<bos>} symbol by the RNN. We compute whether a cluster is accepting or rejecting by taking a majority vote for each $y_{ij}$ in the cluster. Finally, for a token $\sigma$, we assign the transition out of a cluster by collecting all hidden states that are achieved after observing $\sigma$ in that cluster, finding the corresponding clusters, and taking a majority vote.

\section{Missing Plots} \label{app:extras}
\begin{figure*}[!ht]
    \centering
    \includegraphics[width=0.24\linewidth]{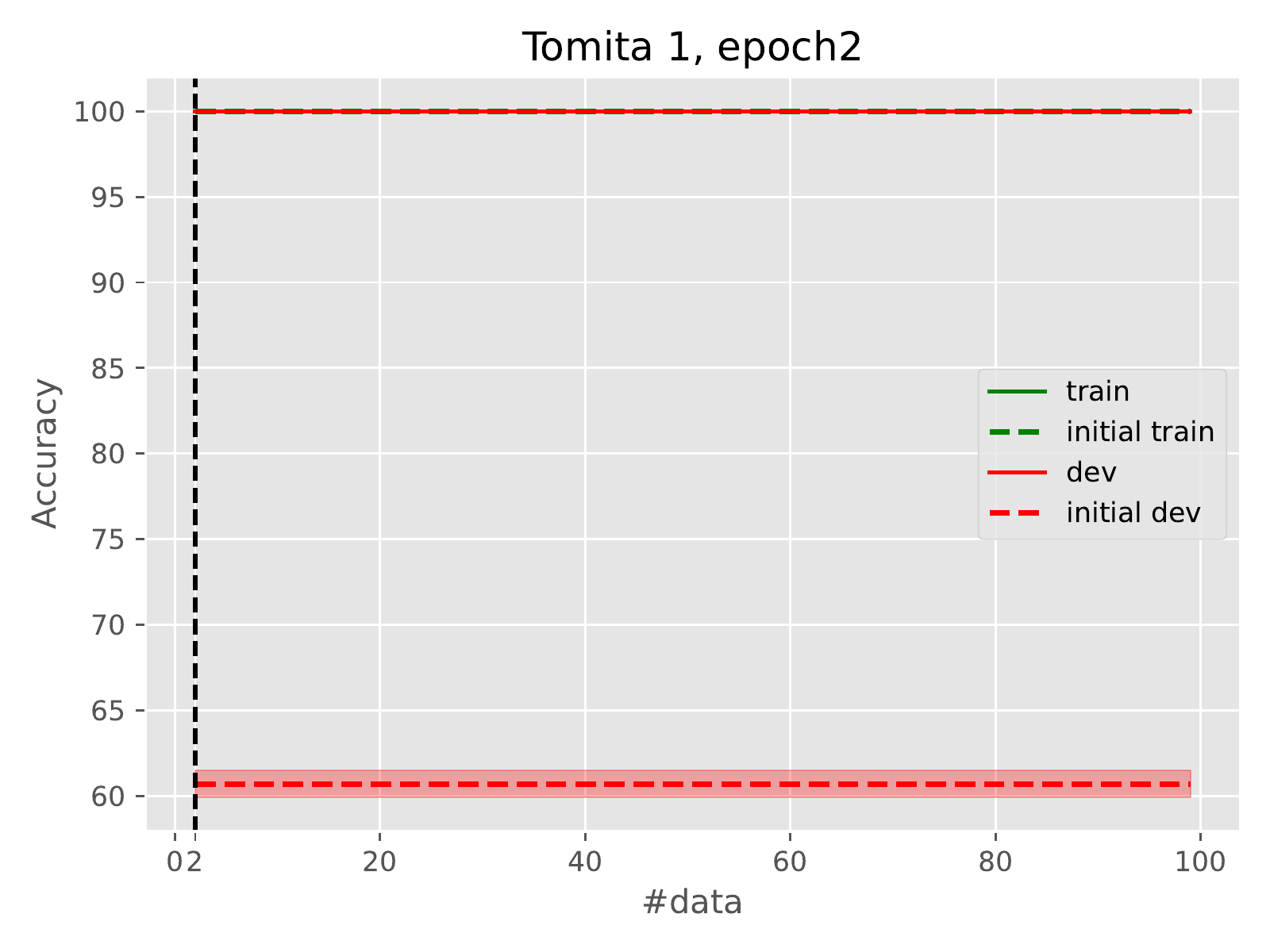}
    \includegraphics[width=0.24\linewidth]{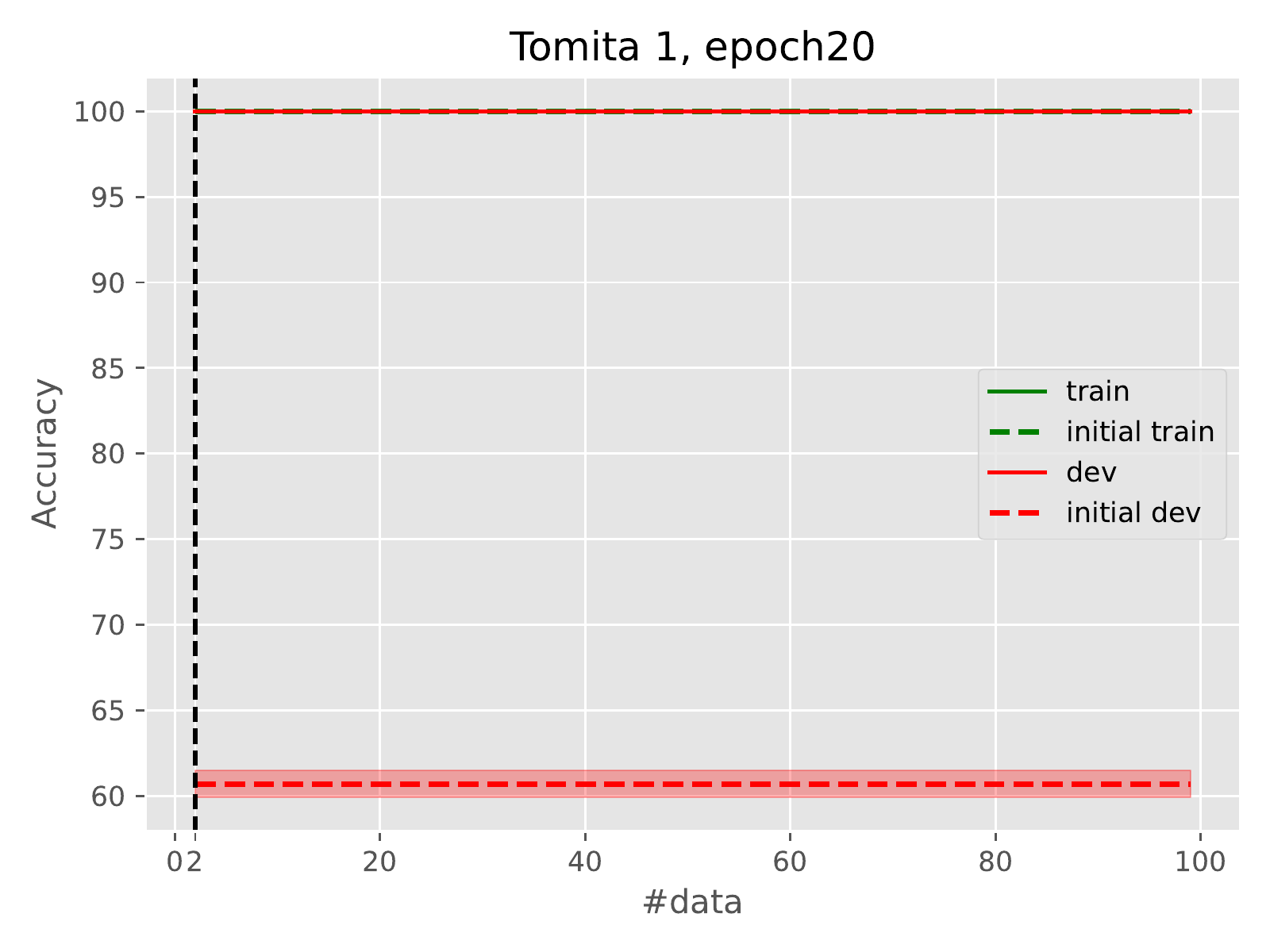}
    \includegraphics[width=0.24\linewidth]{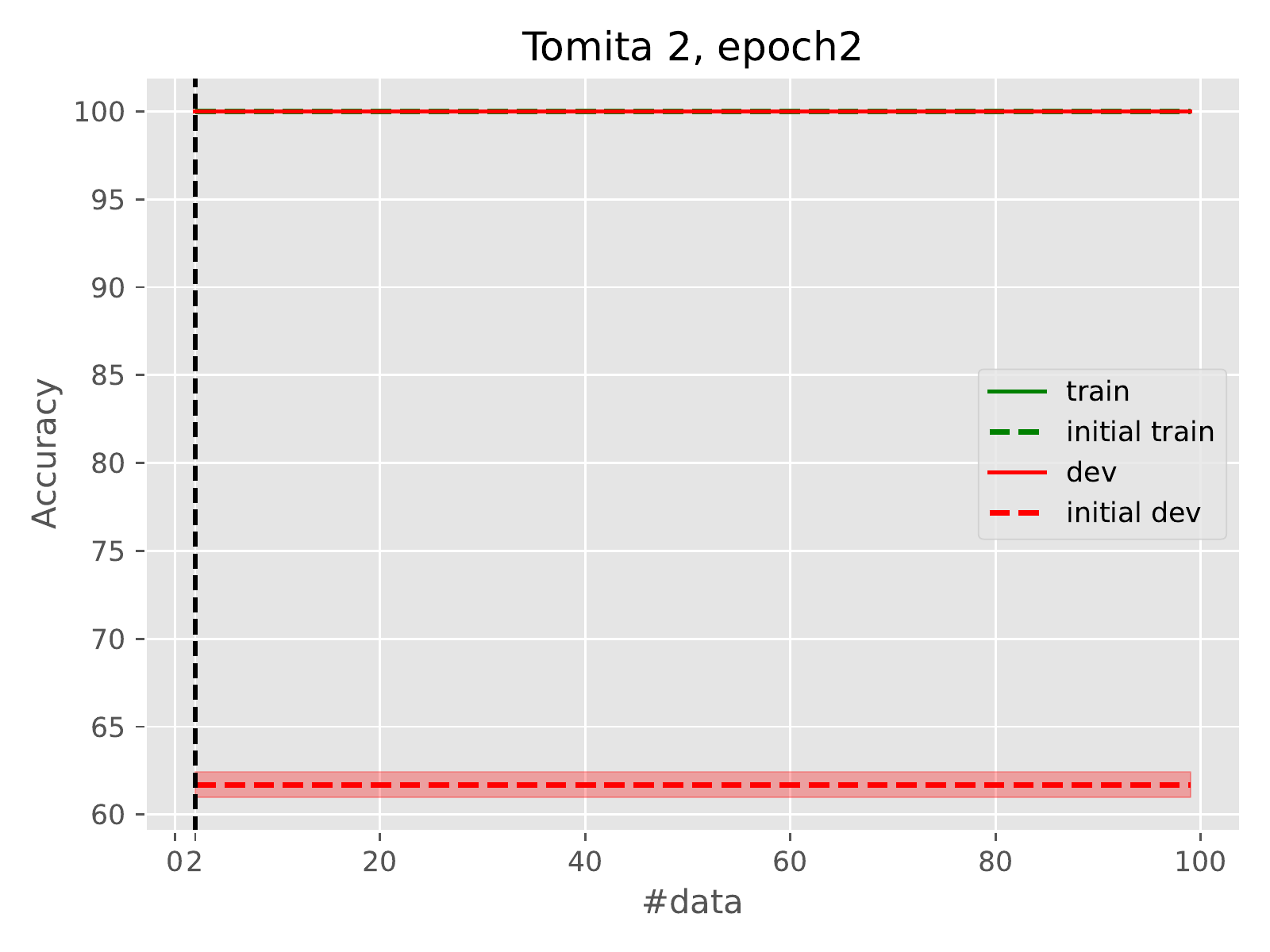}
    \includegraphics[width=0.24\linewidth]{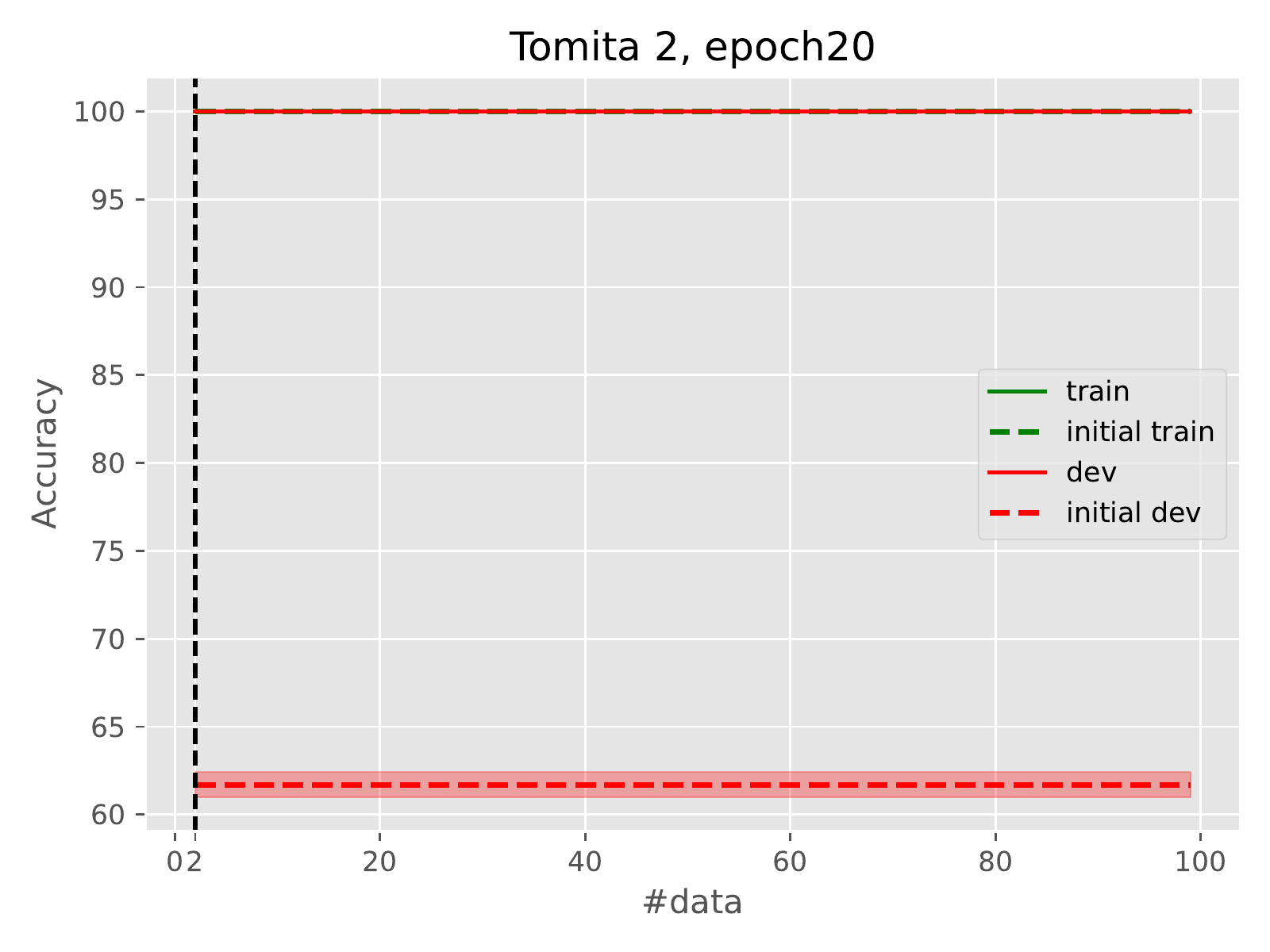}
    \includegraphics[width=0.24\linewidth]{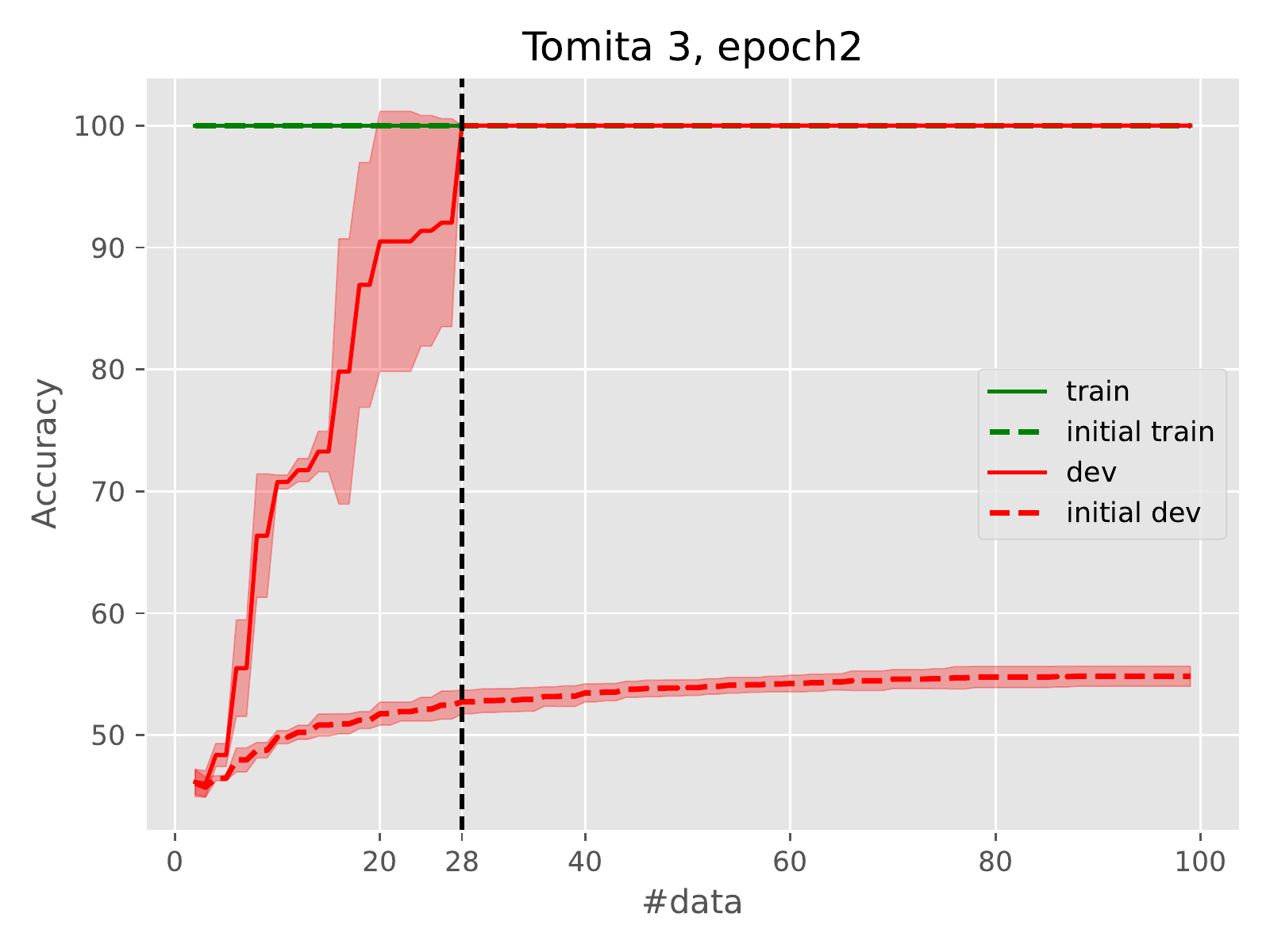}
    \includegraphics[width=0.24\linewidth]{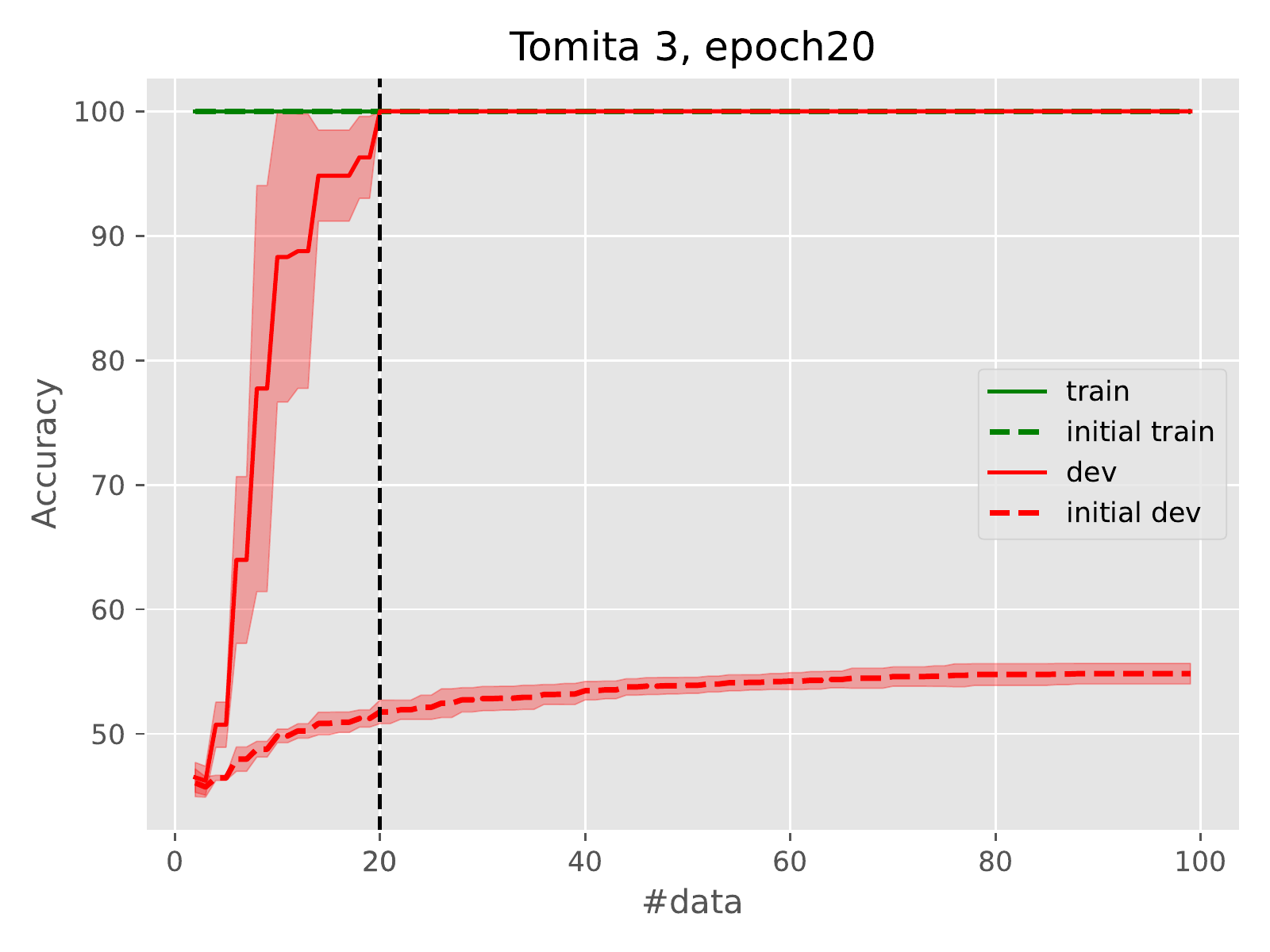}
    \includegraphics[width=0.24\linewidth]{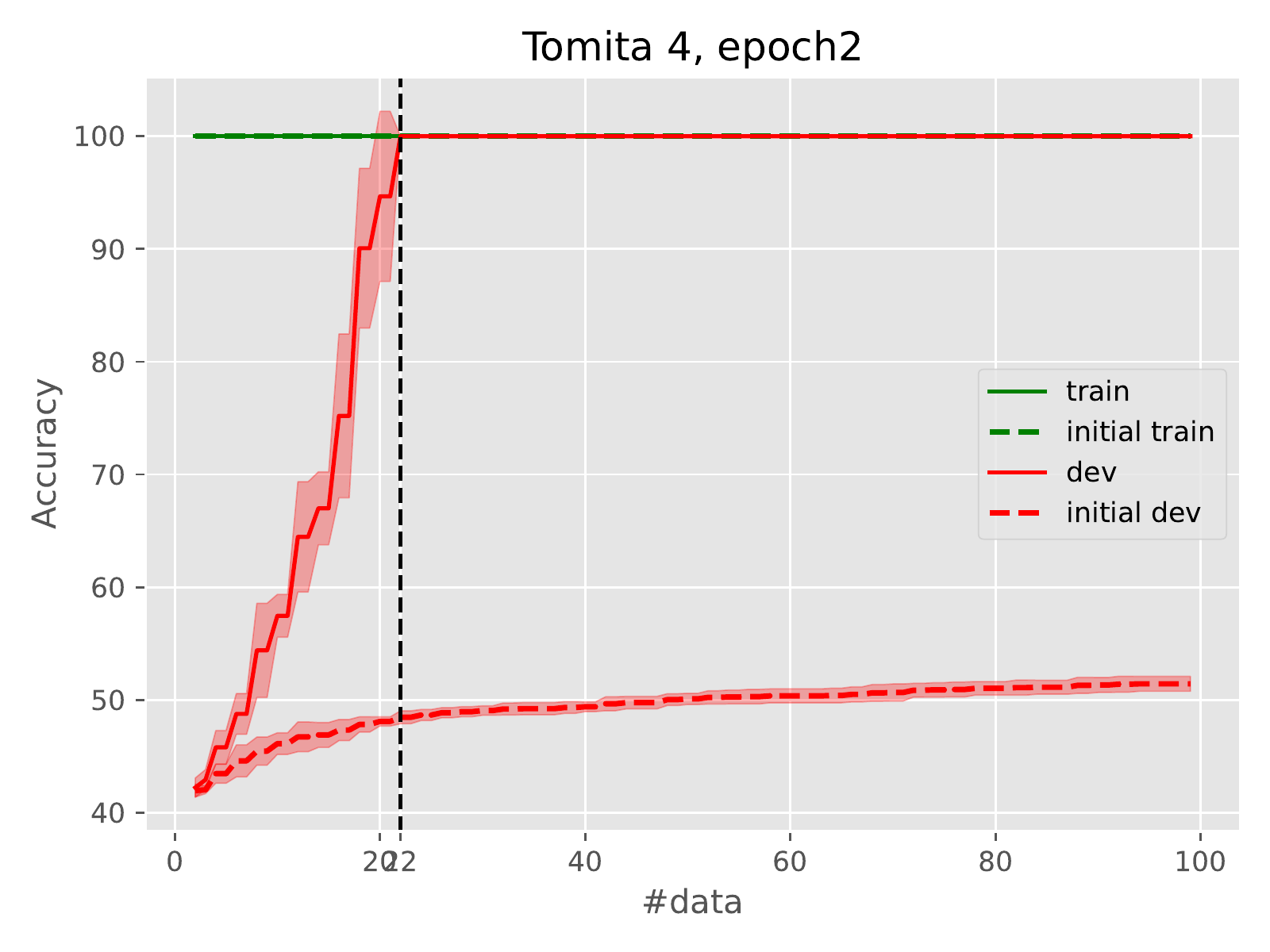}
    \includegraphics[width=0.24\linewidth]{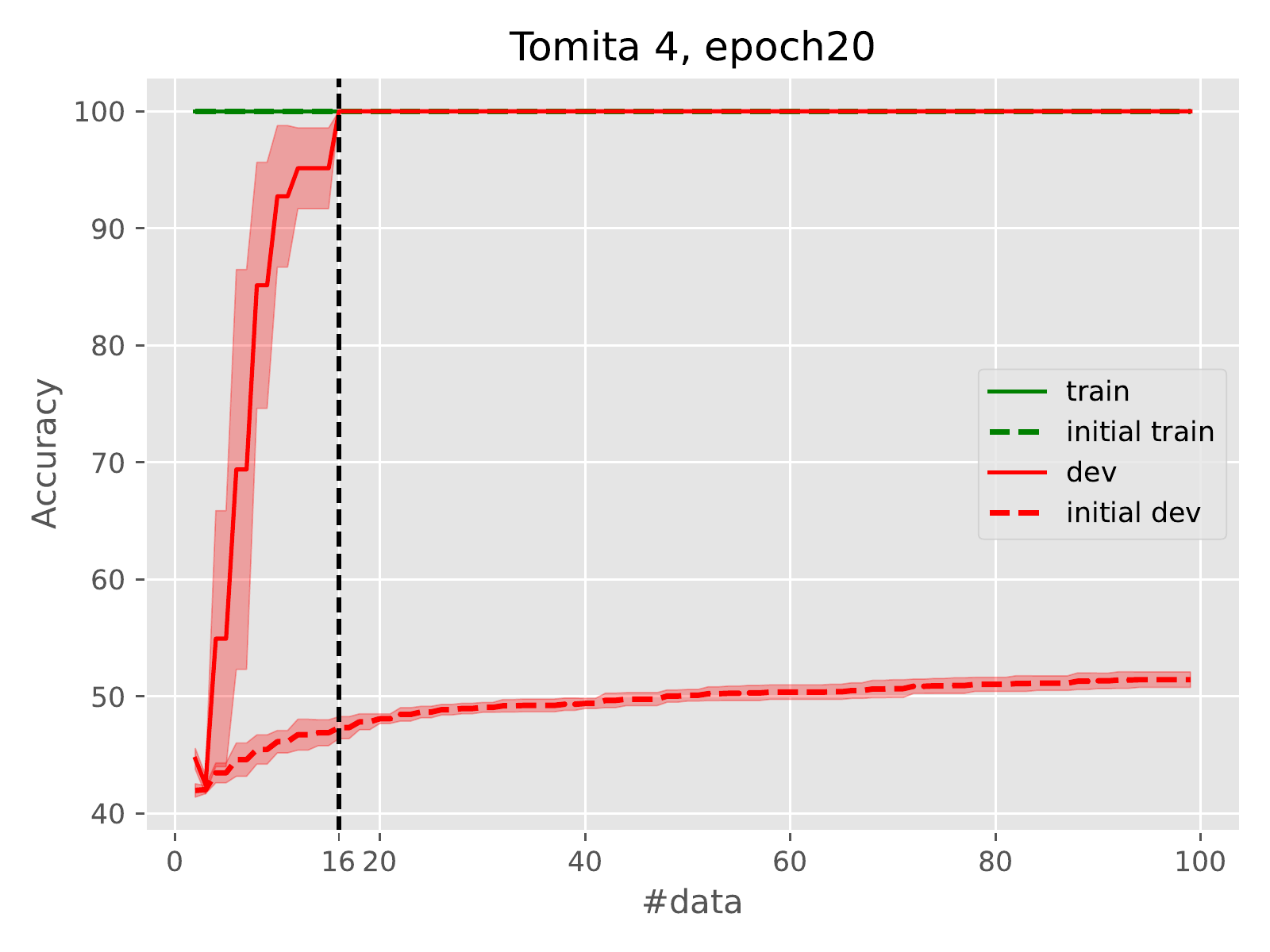}
    \includegraphics[width=0.24\linewidth]{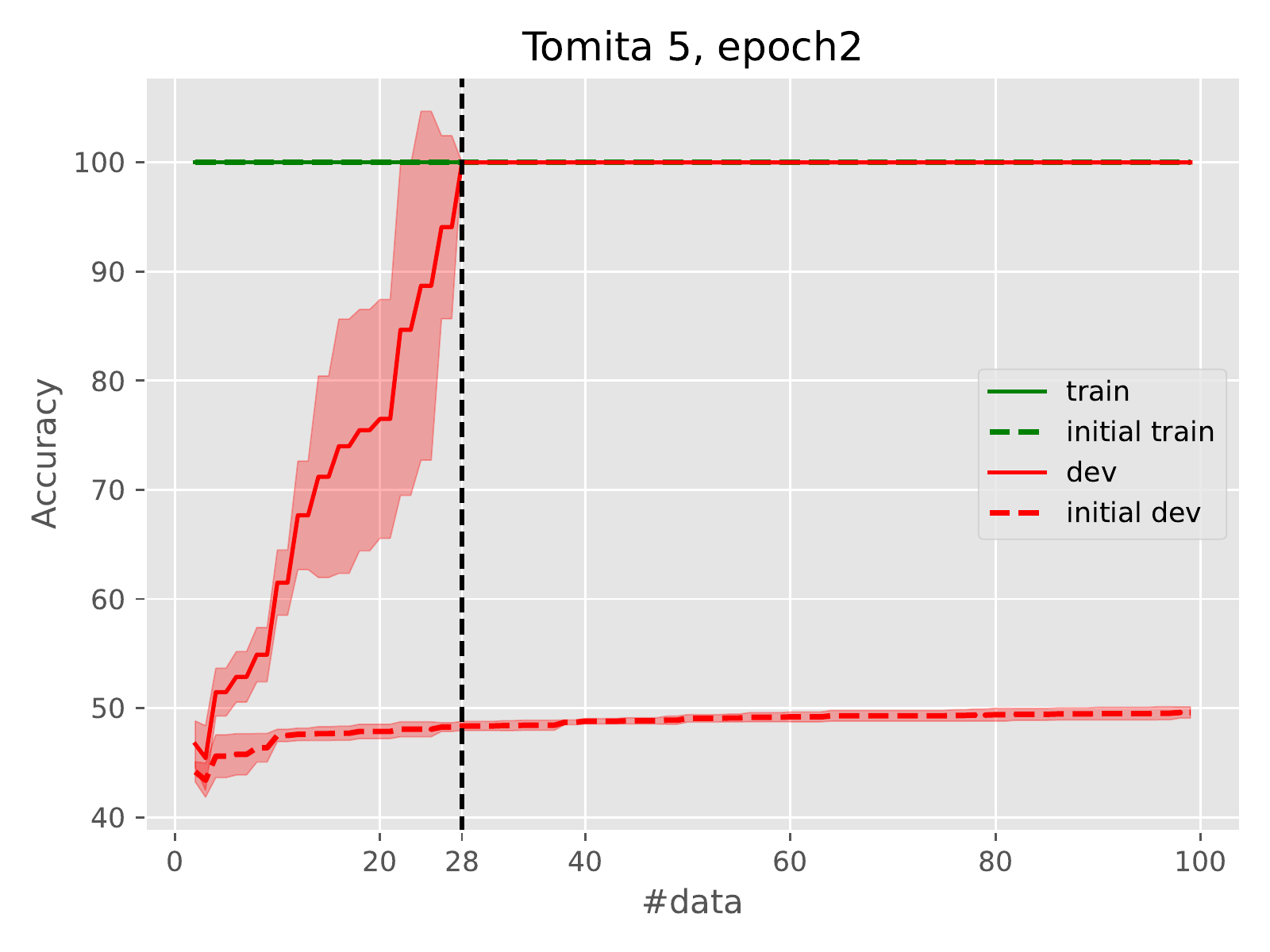}
    \includegraphics[width=0.24\linewidth]{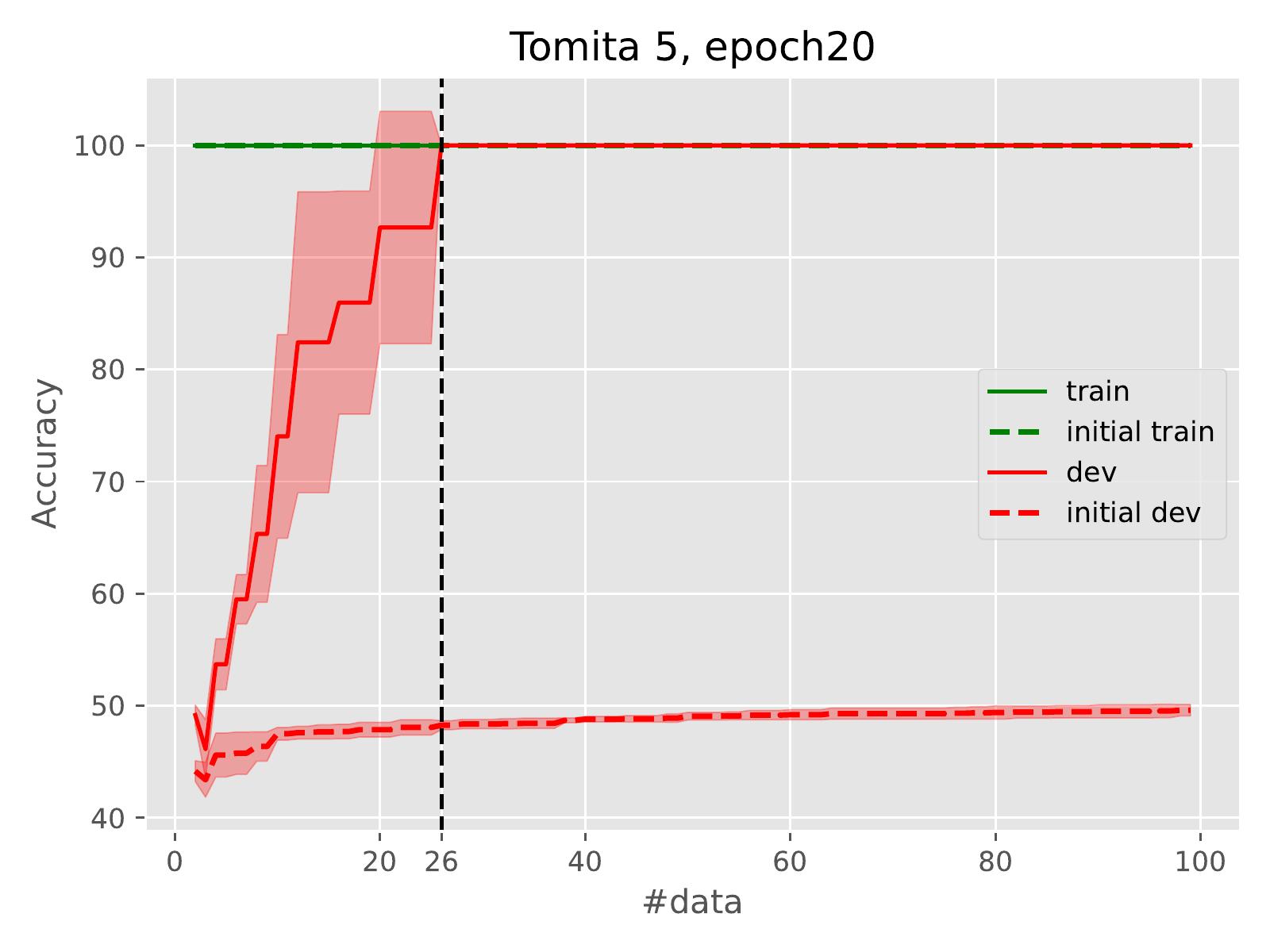}
    \includegraphics[width=0.24\linewidth]{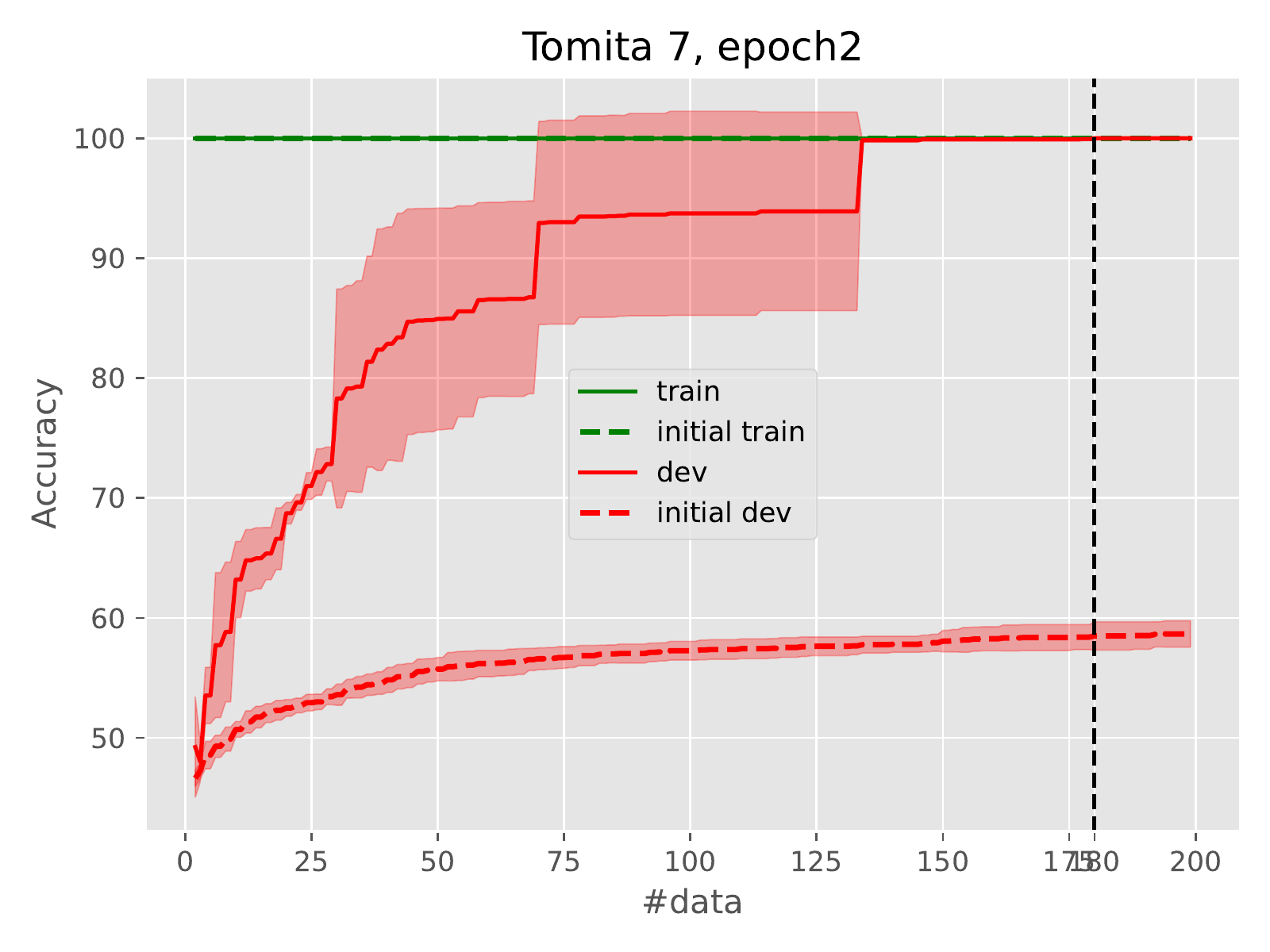}
    \includegraphics[width=0.24\linewidth]{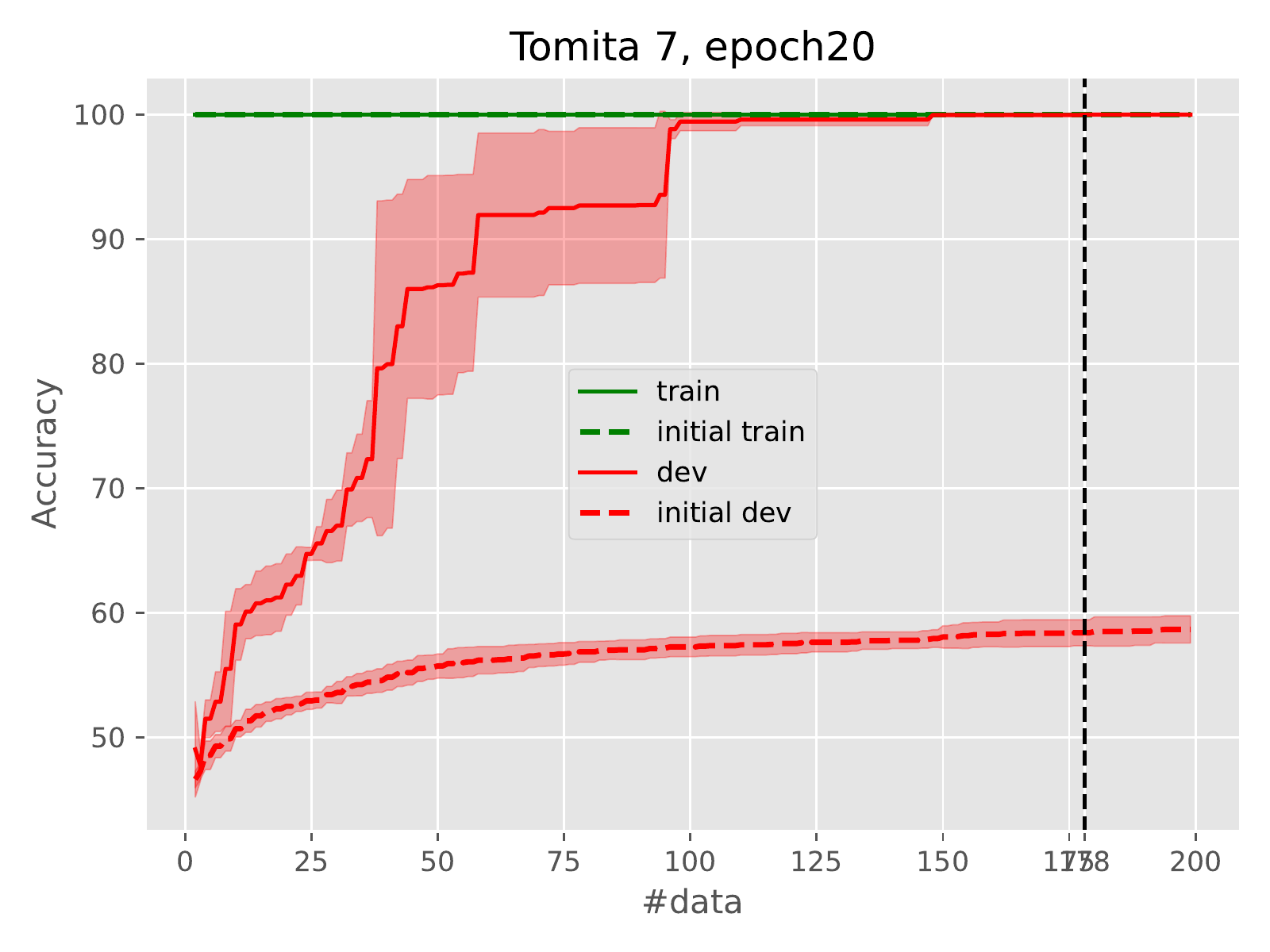}
    \caption{The implicit merging of RNN training as captured by our DFA extraction algorithm.}
    \label{fig:beyond-conv}
\end{figure*}

\section{Proof}

\begin{lemma} \label{lem:l2-cos}
If $\mathbf h_1, \mathbf h_2 \in \mathbb{R}^d$ both have unit norm, then $\norm{\mathbf h_1 - \mathbf h_2}_2^2 = 2(1 - \cos(\mathbf h_1, \mathbf h_2))$.
% \begin{equation*}
%     \norm{\mathbf h_1 - \mathbf h_2}_2^2 = 2(1 - \cos(\mathbf h_1, \mathbf h_2)) .
% \end{equation*}
\end{lemma}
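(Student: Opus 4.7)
The plan is to prove this by direct algebraic expansion of the squared Euclidean norm, using the bilinearity of the inner product together with the unit-norm hypothesis. This is a one-line identity once we recall that the cosine of the angle between two vectors equals their inner product divided by the product of their norms, which collapses to the inner product itself when both vectors have unit norm.

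Concretely, I would first expand $\lVert \mathbf h_1 - \mathbf h_2 \rVert_2^2$ as the inner product $\langle \mathbf h_1 - \mathbf h_2, \mathbf h_1 - \mathbf h_2 \rangle$, and distribute to obtain the three-term sum $\lVert \mathbf h_1 \rVert_2^2 + \lVert \mathbf h_2 \rVert_2^2 - 2 \langle \mathbf h_1, \mathbf h_2 \rangle$. Next, I would invoke the assumption that both vectors are unit-norm to replace $\lVert \mathbf h_1 \rVert_2^2$ and $\lVert \mathbf h_2 \rVert_2^2$ with $1$, yielding $2 - 2 \langle \mathbf h_1, \mathbf h_2 \rangle$. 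Finally, the definition $\cos(\mathbf h_1, \mathbf h_2) = \langle \mathbf h_1, \mathbf h_2 \rangle / (\lVert \mathbf h_1 \rVert_2 \lVert \mathbf h_2 \rVert_2)$ simplifies to $\langle \mathbf h_1, \mathbf h_2 \rangle$ under the same unit-norm hypothesis, so the expression factors as $2(1 - \cos(\mathbf h_1, \mathbf h_2))$, completing the proof.

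There is no real obstacle here: the lemma is a textbook identity, and the entire argument is linear algebra bookkeeping. The only subtlety worth flagging explicitly is the need to justify why cosine coincides with the raw inner product under the stated assumption, so I would make that step explicit rather than implicit, since the proposition that uses this lemma will feed a bound on $\sqrt{\kappa}$ through this identity and any ambiguity about normalization could propagate into the constants there.
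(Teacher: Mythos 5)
Your proposal is correct and follows exactly the same route as the paper's proof: expand the squared norm via bilinearity, substitute the unit-norm hypothesis, and identify the inner product with the cosine. The only difference is that you make the normalization step in the cosine definition explicit, which the paper leaves implicit.
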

\begin{proof}
\begin{align*}
    \norm{\mathbf h_1 - \mathbf h_2}_2^2
    &= \norm{\mathbf h_1}_2^2 - 2\mathbf h_1^\top \mathbf h_2 + \norm{\mathbf h_2}_2^2 \\
    &= 2  - 2\mathbf h_1^\top \mathbf h_2 \\
    &= 2 (1 - \mathbf h_1^\top \mathbf h_2) \\
    &= 2 (1 - \cos(\mathbf h_1, \mathbf h_2)) .
\end{align*}
\end{proof}

Let $\Tilde{\mathbf{h}}$ be the saturated version of vector $\mathbf h$, i.e., viewing $\mathbf h$ as a function of the inputs $x$ and parameters $\theta$ \citep[cf.][]{merrill-2019-sequential},
\begin{equation*}
    \Tilde{\mathbf{h}}(x, \theta) = \lim_{\rho \to \infty} \mathbf h(x, \rho \theta) .
\end{equation*}

\begin{proposition}
Let $\mathbf{h}_1, \mathbf{h}_2 \in \mathbb{R}^d$ be two normalized state vectors, $\Tilde{\mathbf{h}}_1, \Tilde{\mathbf{h}}_2 \in \{\pm 1\}^d$ their saturated versions and assume that the RNN is $\epsilon$-saturated with respect to these states, $\norm{\mathbf h_i - \Tilde{\mathbf{h}}_i}_2 \leq \epsilon$, $i \in \{1, 2\}$. Then, if $\cos(\mathbf h_1, \mathbf h_2) \geq 1 - \kappa$ with $\sqrt{\kappa} < \sqrt{2} \left( \frac{1}{\sqrt{d}} - \epsilon \right)$,
% \begin{equation*}
%     \sqrt{\kappa} < \sqrt{2} \left( \frac{1}{\sqrt{d}} - \epsilon \right),
% \end{equation*}
the two vectors represent the same state on the DFA / saturated RNN ($\Tilde{\mathbf{h}}_1 = \Tilde{\mathbf{h}}_2$).
\end{proposition}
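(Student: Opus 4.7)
The plan is to prove the contrapositive by a triangle inequality that routes from $\Tilde{\mathbf{h}}_1$ to $\Tilde{\mathbf{h}}_2$ through $\mathbf{h}_1$ and $\mathbf{h}_2$, combined with the observation that any two \emph{distinct} saturated state vectors are separated by a $d$-dependent gap on the unit sphere. The three ingredients are (i) the cosine-to-$\ell^2$ conversion from \autoref{lem:l2-cos}, (ii) the $\epsilon$-saturation hypothesis, and (iii) a minimum-distance property of the set of achievable saturated states.

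First, I would apply \autoref{lem:l2-cos} to convert the cosine hypothesis into an $\ell^2$ bound: from $\cos(\mathbf{h}_1, \mathbf{h}_2) \geq 1 - \kappa$ we obtain $\norm{\mathbf{h}_1 - \mathbf{h}_2}_2 \leq \sqrt{2\kappa}$. Combined with the two saturation bounds $\norm{\mathbf{h}_i - \Tilde{\mathbf{h}}_i}_2 \leq \epsilon$ for $i = 1, 2$, the triangle inequality yields $\norm{\Tilde{\mathbf{h}}_1 - \Tilde{\mathbf{h}}_2}_2 \leq 2\epsilon + \sqrt{2\kappa}$. Next, I would quantify the minimum separation between distinct saturated states. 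Taken consistently with the normalization of the $\mathbf{h}_i$, each saturated $\Tilde{\mathbf{h}}_i$ has coordinates $\pm 1/\sqrt{d}$ and unit norm; any two distinct such vectors differ in at least one coordinate, each contributing $(2/\sqrt{d})^2 = 4/d$ to the squared distance, giving $\norm{\Tilde{\mathbf{h}}_1 - \Tilde{\mathbf{h}}_2}_2 \geq 2/\sqrt{d}$.

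Combining, if we had $\Tilde{\mathbf{h}}_1 \neq \Tilde{\mathbf{h}}_2$ then $2/\sqrt{d} \leq 2\epsilon + \sqrt{2\kappa}$, which rearranges to $\sqrt{\kappa} \geq \sqrt{2}\bigl(1/\sqrt{d} - \epsilon\bigr)$ and contradicts the hypothesis; hence $\Tilde{\mathbf{h}}_1 = \Tilde{\mathbf{h}}_2$. The one subtle point I would flag is not analytic but notational: the statement writes $\Tilde{\mathbf{h}}_i \in \{\pm 1\}^d$ while calling $\mathbf{h}_i$ ``normalized'', yet the target bound contains a $1/\sqrt{d}$ factor that is only consistent with both vectors living on the same scale. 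Before writing the proof, I would therefore fix the convention that the $\epsilon$-saturation and the cosine comparison are carried out against the normalized saturated vector $\Tilde{\mathbf{h}}_i/\sqrt{d}$, so that the minimum codeword separation is $2/\sqrt{d}$ rather than $2$. Once this is resolved, no calculation beyond the triangle inequality, \autoref{lem:l2-cos}, and the codeword-separation observation is needed, so I do not anticipate a serious technical obstacle.
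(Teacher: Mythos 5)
Your proof is correct and follows essentially the same route as the paper's: convert the cosine bound to an $\ell^2$ bound via the lemma, chain the triangle inequality through $\mathbf h_1$ and $\mathbf h_2$, and invoke the $2/\sqrt{d}$ minimum separation between distinct saturated states. Your explicit resolution of the scale mismatch (that the $2/\sqrt{d}$ gap presupposes the saturated vectors are rescaled to the unit sphere, with coordinates $\pm 1/\sqrt{d}$) is in fact more careful than the paper, which states the separation for $\{\pm 1\}^d$ vectors without remarking on the normalization.
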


\begin{proof}
By the triangle inequality,
\begin{align*}
    \norm{\Tilde{\mathbf{h}}_1 - \Tilde{\mathbf{h}}_2}_2
    &= \norm{\Tilde{\mathbf{h}}_1 - \mathbf h_1 + \mathbf h_1 - \mathbf h_2 + \mathbf h_2 - \Tilde{\mathbf{h}}_2}_2 \\
    &\leq \norm{\Tilde{\mathbf{h}}_1 - \mathbf h_1}_2 + \norm{\mathbf h_1 - \mathbf h_2}_2 + \norm{\mathbf h_2 - \Tilde{\mathbf{h}}_2}_2 \\
    &\leq 2\epsilon + \norm{\mathbf h_1 - \mathbf h_2}_2 .
\end{align*}
Saturated RNN state vectors take discrete values in $\{-1, 1\}^d$, and thus two state vectors must be equal if the norm of their difference is $< 2/\sqrt{d}$. By the transitivity of inequalities, $\Tilde{\mathbf{h}}_1 = \Tilde{\mathbf{h}}_2$ if
\begin{equation*}
    2\epsilon + \norm{\mathbf h_1 - \mathbf h_2}_2 < \frac{2}{\sqrt{d}} .
\end{equation*}
Applying \autoref{lem:l2-cos} and using the fact that $\cos(\mathbf h_1, \mathbf h_2) \geq 1 - \kappa$,
\begin{align*}
    2\epsilon + \sqrt{2(1 - \cos(\mathbf h_1, \mathbf h_2))} &< \frac{2}{\sqrt{d}} \\
    \therefore \sqrt{\kappa} &< \sqrt{2} \left( \frac{1}{\sqrt{d}} - \epsilon \right) .
\end{align*}
\end{proof}

\end{document}